\documentclass{msml2020}
\usepackage{times}
% Warning: on lines 150-152 of jmlr.cls I disabled the POS algorithm2e package, which is incompatible with algpseudocode package. If the paper is accepted we might have to rewrite the algorithms using the algorithm2e package instead of algorithmic inside of algorithm. 
\usepackage[utf8]{inputenc}
\usepackage[english]{babel}
\usepackage{bm}
\usepackage{mathrsfs}
\usepackage{amsmath,amsfonts,amssymb}
\usepackage{graphicx}
\usepackage{epstopdf}
\usepackage{algorithm}
\usepackage{algpseudocode}
\usepackage{soul}
\usepackage{enumitem}
\usepackage{multirow}

\newcommand{\edit}[1]{\textcolor{black}{#1}}

\title{Robust Training and Initialization of Deep Neural Networks: \\ An Adaptive Basis Viewpoint}

\msmlauthor{%
 \Name{Eric C. Cyr} \Email{eccyr@sandia.gov}\\
 \addr Center for Computing Research, Sandia National Laboratories
 \AND
 \Name{Mamikon A. Gulian} \Email{mgulian@sandia.gov}\\
 \addr Center for Computing Research, Sandia National Laboratories
 \AND
 \Name{Ravi G. Patel} \Email{rgpatel@sandia.gov}\\
 \addr Center for Computing Research, Sandia National Laboratories
 \AND
 \Name{Mauro Perego} \Email{mperego@sandia.gov}\\
 \addr Center for Computing Research, Sandia National Laboratories
 \AND
 \Name{Nathaniel A. Trask} \Email{natrask@sandia.gov}\\
 \addr Center for Computing Research, Sandia National Laboratories
}

\newcommand{\sym}[2]{(\protect\includegraphics[height=8pt]{figures/symbols/#1_#2})}

\begin{document}

\maketitle
\begin{abstract}

Motivated by the gap between theoretical optimal approximation rates of deep neural networks (DNNs) and the accuracy realized in practice, we seek to improve the training of DNNs. The adoption of an adaptive basis viewpoint of DNNs leads to novel initializations and a hybrid least squares/gradient descent optimizer. We provide analysis of these techniques and illustrate via numerical examples dramatic increases in accuracy and convergence rate for benchmarks characterizing scientific applications where DNNs are currently used, including regression problems and physics-informed neural networks for the solution of partial differential equations.
\end{abstract}

\section{Introduction}

Universal approximation properties of neural networks are often touted as an explanation of the success of deep neural networks (DNNs) in applications. 
Despite their importance, such theorems offer no explanation for the advantages of neural networks, let alone \textit{deep} neural networks, over classical approximation methods, since universal approximation properties are enjoyed by polynomials \citep{cheney2009course} as well as single layer neural networks \citep{cybenko1989approximation}. 
To address this, a recent thread has emerged in the literature concerning optimal approximation with deep ReLU networks, where the error in an optimal choice of weights and biases is bounded from above using the width and depth of the neural network. 

For example, using the ``sawtooth'' function of \citet{telgarsky2015representation}, \citet{yarotsky2017error} constructed an exponentially accurate (in the number of layers) ReLU network emulator for multiplication $(x, y) \mapsto xy$. This construction is used to obtain upper bounds on optimal approximation based upon DNN emulation of polynomial approximation.
Building on these ideas, \citet{opschoor2019deep} proved that optimal approximation with deep ReLU networks can emulate adaptive $hp$-finite element approximation, with greater depth allowing $p$-refinement to obtain exponential convergence rates. 
An additional contribution by \citet{he2018relu} reinterpreted single hidden layer ReLU networks as $r$-adaptive piecewise linear finite element spaces.

Despite this, it remains a challenge to realize 
these theorized convergence rates for DNNs using practical initialization and training methods. The need is particularly acute
 in scientific machine learning (SciML) applications which demand greater accuracy and robustness from DNNs \citep{raissi2019physics, baker2019workshop}. 
In practice, optimization and initialization challenges preclude the realization of theoretical convergence rates. Optimizers are susceptible to finding suboptimal local minima of loss functionals, and as a result DNN regression typically stagnates after achieving only a few digits of accuracy. For example, using the aforementioned architecture of \citet{yarotsky2017error} for the deep ReLU emulator of $x \mapsto x^2$\edit{, but with random initial weights, \citet{fokina2019growing} showed that training with stochastic gradient descent to approximate $x \mapsto x^2$ fails to demonstrate a significant improvement in error with depth, let alone exponential convergence with the number of layers.} \citet{lu2018collapse, lu2019dying} demonstrate consistent failure of deep ReLU networks to approximate the function $|x|$ on $[-1,1]$ due to gradient death at initialization.
These results illustrate the need for robust training and initialization algorithms for regression and approximation in scientific problems. We aim to bridge the gap between theoretical optimal error estimates and the error one can consistently achieve with training.   
\begin{figure}[htbp!]
    \centering
    \includegraphics[width=0.66\textwidth]{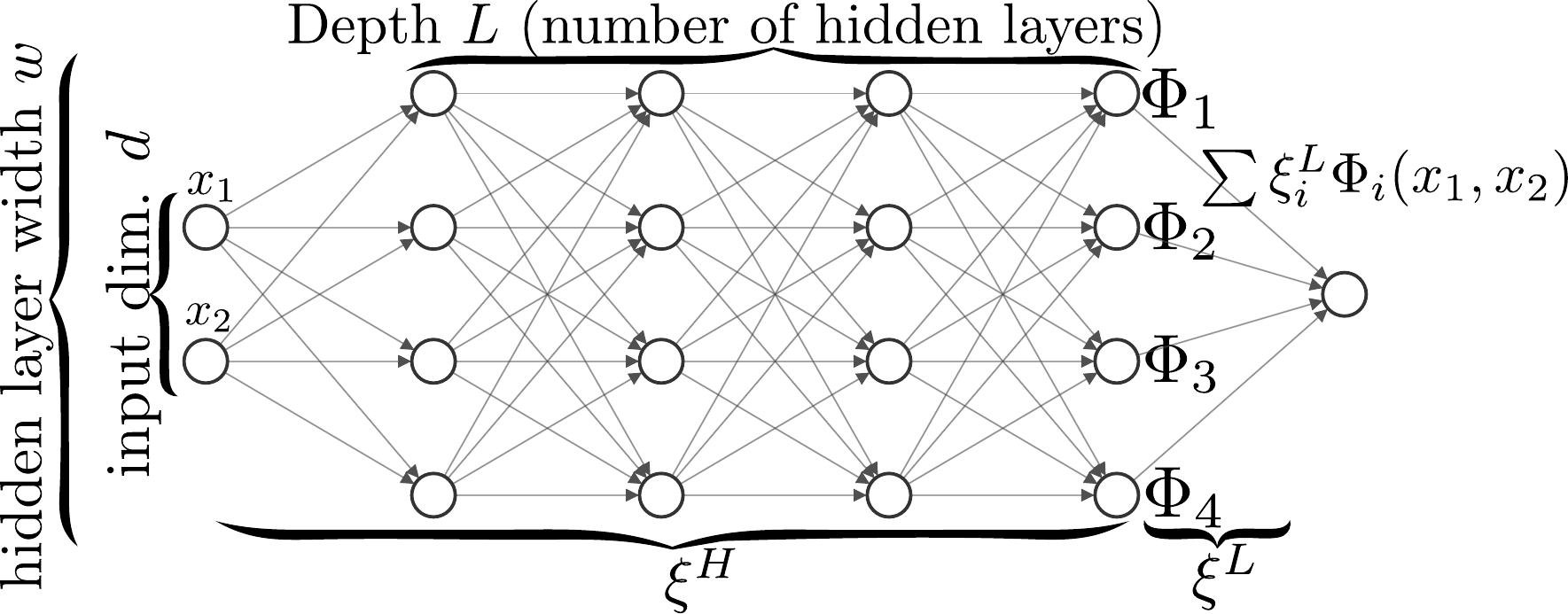}
    \vspace{-2ex}
    \caption{Adaptive basis view of a DNN with linear output layer, with notation used in this article.}
    \label{fig:basisComparison}
\end{figure}

In the current work we adapt the perspective that DNNs provide a meshfree technique to construct an adaptive basis.
This viewpoint suggests an optimizer that alternates between least squares (LS) and gradient descent (GD) steps. 
This process amounts to adapting the basis functions with respect to  to the data using GD while ensuring with LS that basis functions optimally fit the data. 
This training strategy is applicable to networks with arbitrary activation function in the hidden layers, a final linear activation layer, and a mean-square loss functional. 

From the adaptive basis viewpoint, we also propose a new initialization for deep ReLU networks that we refer to as the ``box initialization'', designed to provide an expressive initial guess for the basis. We show that this initialization outperforms the Glorot \citep{glorot10} and He initializations \citep{he2015delving} for one-dimensional approximation using plain networks for a mild number of layers. Via a novel analysis of DNNs in terms of this adaptive basis perspective, we extend the box initialization to residual ReLU networks and show improvements upon the He and Glorot initializations through 256 layers.

Combining our initialization for residual neural networks (ResNets) with the hybrid LSGD training algorithm, we demonstrate convergence of the approximation error for very deep ReLU neural networks with increasing depth. While the variance in errors remains high and the ``convergence rates'' are lower than suggested theoretically, the improvement in reliability across a range of regression-like applications is substantial. Further, the architectures used are standard. In contrast, previous works in the literature have focused on how to prevent collapse or else considered specialized architectures. 

\section{Problem statement}

We consider in this work the following class of $\ell_2$ regression problems:
\begin{equation}\label{eqn:loss}
    \underset{\bm{\xi}}{\text{argmin }} 
    \sum_{k=1}^K
    \epsilon_k
    \left \| \mathcal{L}_k[u] - \mathcal{L}_k\left[\mathcal{NN}_{\bm{\xi}}\right] \right \|^2_{\ell_2(\mathcal{X}_k)}
\end{equation}
where for each $k = 1, 2, ..., K$, $\mathcal{X}_k = \{x^{(k)}_i\}_{i=1}^{N_k}$ denotes a finite collection of data points, $\mathcal{NN}_{\bm{\xi}}$ a neural network with parameters $\bm{\xi}$, and $\mathcal{L}_k$ a linear operator. In the case where $k=1$ and $\mathcal{L}$ is the identity, we obtain the standard regression problem
\begin{equation}\label{eqn:std-regression}
    \underset{\bm{\xi}}{\text{argmin }} 
    \| u - \mathcal{NN}_{\bm{\xi}} \|_{\ell_2(\mathcal{X})}^2.
\end{equation}
In general \eqref{eqn:loss} represents a broader class of multi-term loss functions, including those used in physics-informed neural networks (\cite{raissi2019physics}) for solving linear PDEs (see Section \ref{sec:pinns}). Moreover, while we restrict our study to a single scalar ``target'' function $u$ in most of the paper,  
in Section \ref{sec:multifunction} we apply our framework to regress multiple functions simultaneously.

We consider the family of neural networks $\mathcal{NN}_{\bm{\xi}}: \mathbb{R}^d \rightarrow \mathbb{R}$ consisting of $L$ hidden layers of width $w$ composed with a final linear layer (see Fig. \ref{fig:basisComparison}), admitting the representation
\begin{equation}\label{eqn:NNbasis}
\mathcal{NN}_{\bm{\xi}}(\bm{x}) = \sum \limits_{i=1}^{w} \xi_i^{\text{L}} \Phi_i(\bm{x};\bm{\xi}^{\text{H}})
\end{equation}
where $\bm{\xi}^{\text{L}}$ and $\bm{\xi}^{\text{H}}$ are the parameters corresponding to the final linear layer and the hidden layers respectively, and we interpret $\bm{\xi}$ as the concatenation of $\bm{\xi}^{\text{L}}$ and $\bm{\xi}^{\text{H}}$. Working with this form allows us to highlight the interpretation of neural networks as an adaptive basis.

A broad range of architectures admit this interpretation. In this work we consider both plain neural networks (also referred to as multilayer perceptrons) and residual neural networks (ResNets). Defining the affine transformation, $\bm{T}_l(\bm{x},\bm{\xi}) = \bm{W}_l^{\bm{\xi}} \cdot \bm{x} + \bm{b}_l^{\bm{\xi}}$, and given an activation function $\sigma$, plain neural networks correspond to the choice 

\begin{equation}
    \bm{\Phi}^{\text{plain}}(\bm{x},\bm{\xi}) = \bm{\sigma}\circ\bm{T}_L\circ \dots \circ \bm{\sigma}\circ\bm{T}_1,
\end{equation}
while residual networks (see~\cite{he2016deep,he2016identity}) correspond to
\begin{equation}
    \bm{\Phi}^{\text{res}}(\bm{x},\bm{\xi}) = (\bm{I}+\bm{\sigma}\circ\bm{T}_L)\circ \dots \circ (\bm{I}+\bm{\sigma}\circ\bm{T}_2)\circ (\bm{\sigma}\circ\bm{T}_1),
\end{equation}
where $\bm{\Phi}$ is the vector of the $w$ functions $\Phi_i$, $\bm{\sigma}$ the vector of the $w$ activation functions $\sigma$ and $\bm{I}$ denotes the identity. In both cases $\bm{\xi}^{\text{H}}$ corresponds to the weights and biases $\bm{W}$ and $\bm{b}$.

In the case of a single hidden layer plain network with ReLU activation, one obtains a piecewise linear $C^0$ finite element space. This case has been considered by \citet{he2018relu}, who show that training amounts to adapting a piecewise linear finite element space to data. In the broader context considered here, an adaptive basis tailored to the choice of activation function is obtained. For example, selecting a radial basis function (RBF) as activation for a single layer network corresponds to a RBF space with centers and shape parameters adapted to data.  Many other architectures admit the proposed interpretation, such as e.g. convolutional networks. 

\section{Hybrid least squares/GD training approach}
\label{LS_section}
Using the Neural Network representation in \eqref{eqn:NNbasis}, equation \eqref{eqn:loss} reads
\begin{equation} \label{eqn:separableNonLinearLeastSquare}
    \underset{\bm{\xi^\text{L}},\,\bm{\xi^\text{H}}}{\text{argmin }} 
    \sum_{k=1}^K \epsilon_k
    \left \| \mathcal{L}_k [u] -
 \sum_i \xi_i^{\text{L}} \mathcal{L}_k \left[\Phi_i(\bm{x},\bm{\xi}^{\text{H}})\right]
\right \|_{\ell_2(\mathcal{X}_k)}^2.
\end{equation}
A typical approach to solving Equation \ref{eqn:separableNonLinearLeastSquare} is to apply gradient descent with backpropagation jointly in $(\bm{\xi}^{\text{L}}, \bm{\xi}^{\text{H}})$. 
Given the adaptive basis viewpoint, an alternative is to
 hold the hidden weights $\bm{\xi}^{\text{H}}$ constant and minimize w.r.t. to $\bm{\xi}^{\text{L}}$, yielding the LS problem (for simplicity focusing on $K=1$):
\begin{equation}\label{eqn:lsq}
\underset{\bm{\xi}^{\text{L}}}{\text{argmin }}
    \left \| A \bm{\xi}^{\text{L}} - \bm{b} \right \|^2_{\ell_2(\mathcal{X})}
\end{equation}
Here we have $\bm{b}_i = \mathcal{L}[u](\bm{x}_i)$ and $A_{ij} = \mathcal{L} \left[\Phi_j(\bm{x}_i,\bm{\xi}^{\text{H}})\right]$ for $\bm{x}_i\in \mathcal X, \; i=1,\ldots, N$, $j=1, \ldots, w$. Problem \ref{eqn:lsq} is well posed if $N \ge w$ and $A$ is a full-rank matrix; otherwise the problem is under-determined and admits multiple solutions. This occurs if the basis functions $\Phi_j$ are linearly dependent over $\ell_2(\mathcal{X})$, as can occur for many weights initializations (see Section \ref{sec:init}). In that case, the Moore-Penrose pseudo-inverse $A^{+}$ can be used to compute the minimum-norm solution $\bm{\xi}^{\text{L}} = A^{+} \bm{b}$. In this work, we use the TensorFlow \citep{tensorflow2015} implementation provided by the function \texttt{lstsq} to compute the minimun-norm solution $\bm{\xi}^{\text{L}}$.

Exposing the LS problem in this way prompts a natural modification of gradient descent. The optimization algorithm proceeds by alternating between: a LS solve to update $\bm{\xi}^\text{L}$ by a global minimum for given $\bm{\xi}^{H}$; 
and a GD step to update $\bm{\xi}^\text{H}$ (Algorithm \ref{alg:LSGD}).

\begin{algorithm}[]
   \caption{Hybrid least squares/gradient descent}
    \begin{algorithmic}[1]
      \Function{LSGD}{$\bm{\xi}_0^H$}
        \State $\bm{\xi}^H = \bm{\xi}_0^H$\Comment{Input initialized hidden parameters}\label{lne:alg1-init}
        \State $\bm{\xi}^L = LS(\bm{\xi}^H)$\Comment{Solve LS problem for $\bm{\xi}^L$}
        \For{$i = 1\dots$}
            \State $\bm{\xi}^H = GD(\bm{\xi})$\Comment{Solve GD problem}
            \State $\bm{\xi}^L = LS(\bm{\xi}^H)$
        \EndFor
       \EndFunction
\end{algorithmic}
\label{alg:LSGD}
\end{algorithm}

Problem \ref{eqn:separableNonLinearLeastSquare} is referred to in the inverse-problems literature as a \emph{separable nonlinear least square} problem. It is often solved with the variable projection method \citep{golubPereyra_1973, golub_2003} in which $\bm{\xi}^{\text{L}}$ is computed by solving \eqref{eqn:lsq} as a function of $\bm{\xi}^{\text{H}}$ and is substituted into \eqref{eqn:separableNonLinearLeastSquare}, leading to a minimization problem over the the hidden parameters $\bm{\xi}^{\text{H}}$ only, which can then be solved with a suitable optimization method. The variable projection method has been used for shallow (one hidden layer) neural networks in \cite{pereyra_2006}. A LS approach was also used in a greedy algorithm to generate adaptive basis elements by \citet{fokina2019growing}. 

In the approach presented here, instead of eliminating $\bm{\xi}^{\text{L}}$ through a LS solve, we alternate between the minimization of the two sets of parameters, $\bm{\xi}^{\text{L}}$ and $\bm{\xi}^{\text{H}}$, which is simpler to implement. In fact, with libraries such as Tensorflow \citep{tensorflow2015} and PyTorch \citep{pytorch}, one may automate extraction of the least squares problem (Equation \ref{eqn:lsq}) directly from the graphical representation of a neural network. Hence, algorithm \ref{alg:LSGD} may be easily implemented as a ``black-box'' layer on top of any architecture described by Equation \ref{eqn:NNbasis}.

\begin{figure}
    \centering
    \includegraphics[height=1.43in]
    {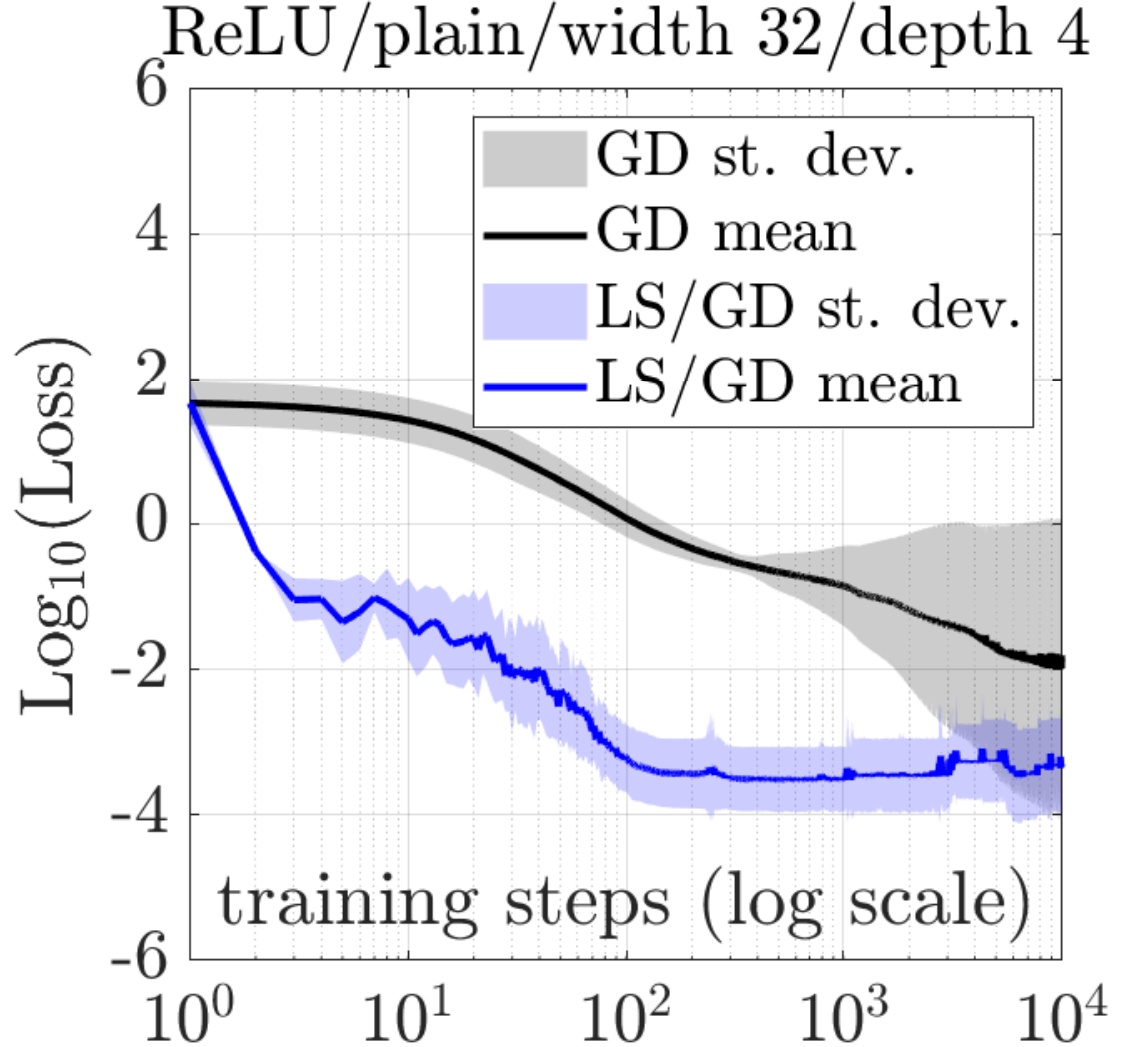}
    \includegraphics[height=1.43in]
    {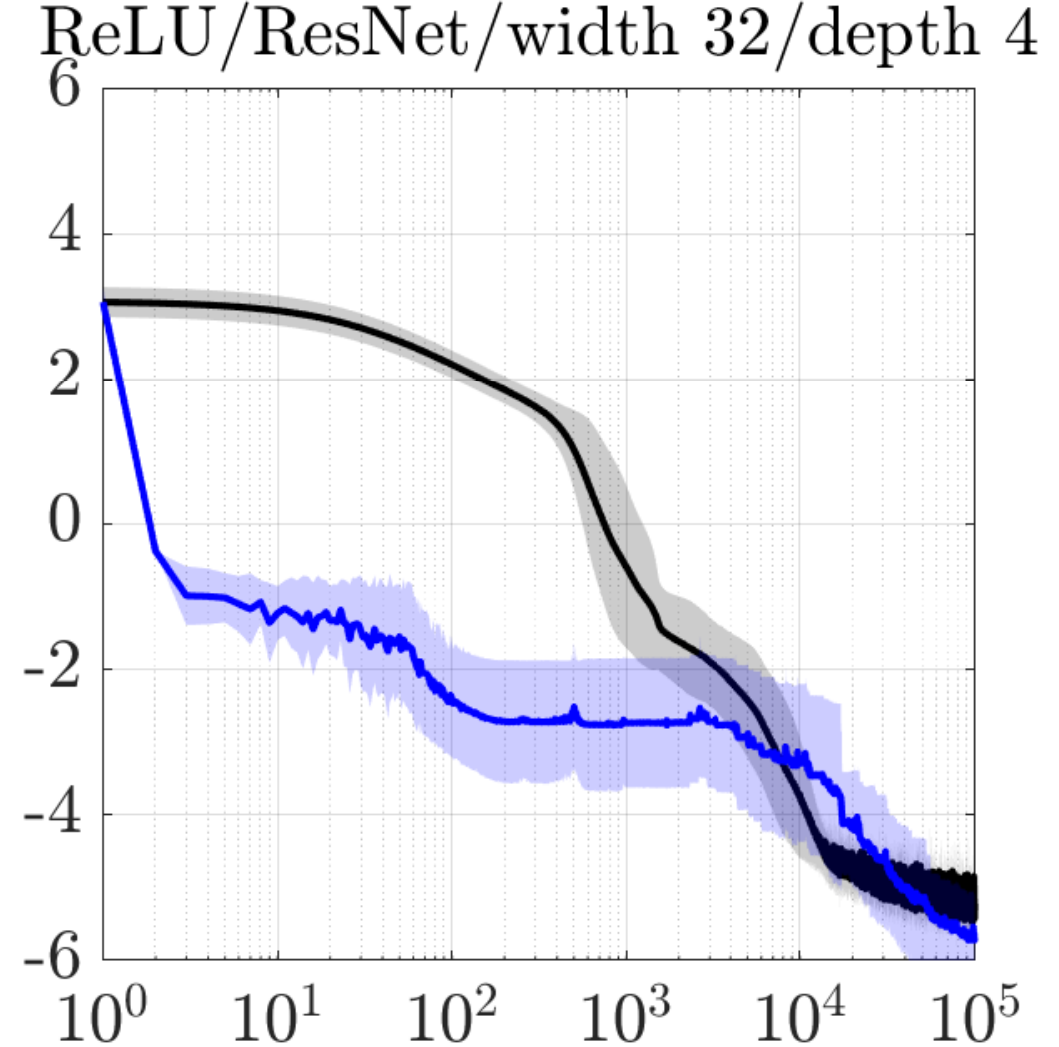}
    \includegraphics[height=1.43in]
    {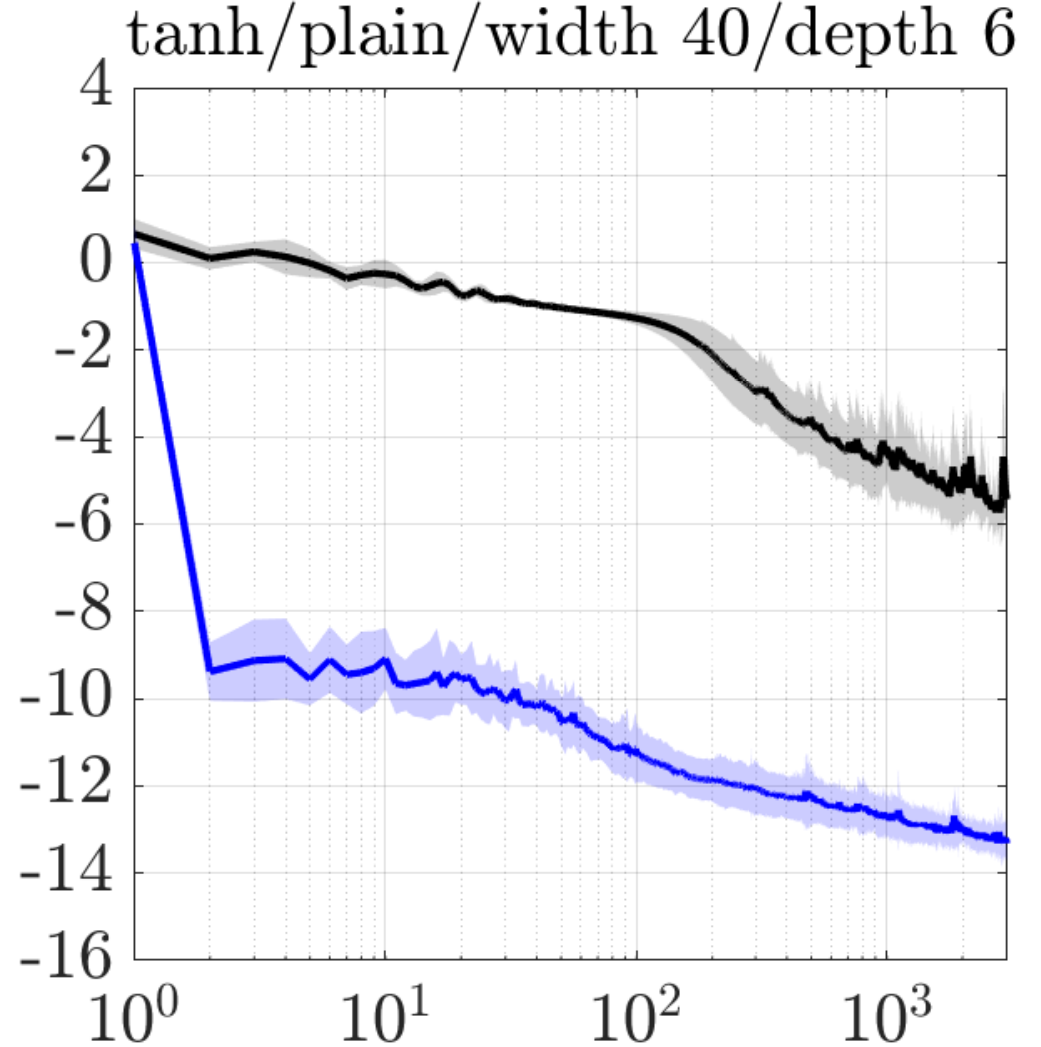}
    \includegraphics[height=1.43in]
    {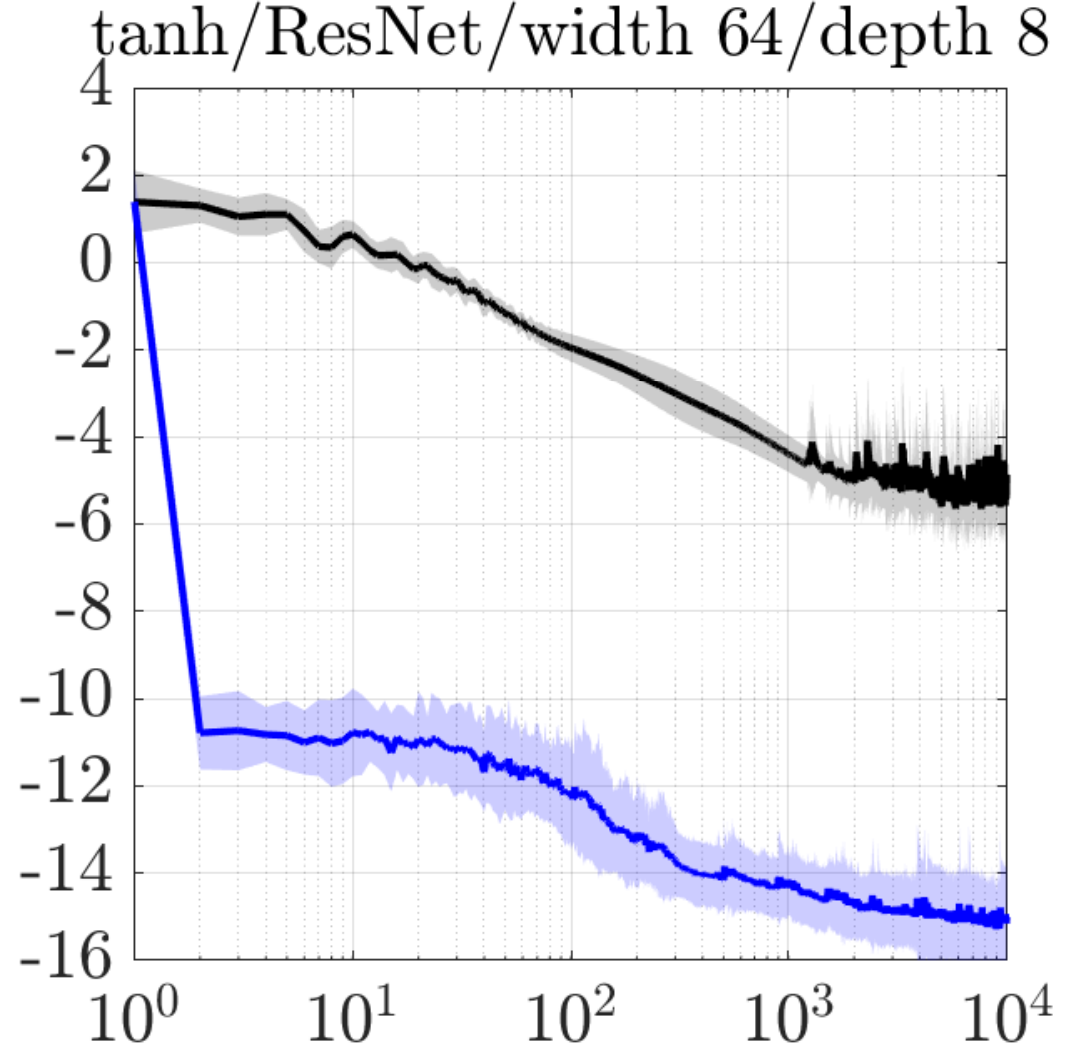}
\vspace{-4ex}
    \caption{Mean of $\log_{10}(\text{Loss})$ over 16 training runs $\pm$ one standard deviation of the same quantity. Training rate 0.0005 for GD and 0.005 for LSGD for plain network \emph{(left)} and ResNet \emph{(right)}.}
    \label{fig:lsgd_training}
\vspace{-4ex}
\end{figure}

We illustrate the advantages of LSGD training for approximating $\sin(2 \pi x)$ on $[0,1]$ using DNNs with ReLU and $\tanh$ activation in plain and ResNet architectures in Fig. \ref{fig:lsgd_training}. We use uniform He initialization and the Adam optimizer \citep{kingma2014adam} for the gradient descent steps; learning rates are tuned by hand to give stable training. We found that the LSGD optimizer performs best with a higher learning rate than that of GD -- roughly 10 times higher for ReLU networks, and 100 times higher for $\tanh$ networks. The results show that the loss in the LSGD method is typically several orders of magnitude lower than the loss in the GD algorithm after the same number of iterations. This is particularlly apparent for the $\tanh$ networks. However, we also included in Fig. \ref{fig:lsgd_training} a rare case in which the LSGD loss is momentarily overtaken by the pure GD loss to show that LSGD training and GD training do not admit a simple ``global'' comparison; for a further discussion of this as well as computational cost of LSGD, see Appendix \ref{lsgd_appendix}.

\section{The Box Initialization for deep ReLU networks} \label{sec:init}
The first step in Algorithm \ref{alg:LSGD} is to initialize the hidden layer parameters. 
An initialization resulting in a well-conditioned, basis that is linearly independent in $\ell_2(\mathcal{X})$ will provide a richer approximation space for the least squares problem and give the gradient descent optimizer several ``active'' basis functions to tune. In contrast, an initialization leading to poorly-conditioned, linearly dependent basis functions -- such as a basis functions with support disjoint from the data -- will yield a less expressive basis in which a local variation of the hidden parameters may not improve the loss.

\subsection{Plain Neural Networks}
\label{plain_init_subsec}
Analyses of the representation power of ReLU networks have shed light on the role played by the biases for representing continuous piecewise linear (CPWL) functions \citep{arora2016understanding, hanin2017universal, hanin2017approximating, he2018relu}. 
For example, for CPWL functions of one variable, \citet{he2018relu} identified their single layer ReLU network representations $\sum \lambda_i \text{ReLU}(x-\beta_i)$ with nodal finite element representations, with the nodes given by $\beta_i$. 
In higher dimensions, the cut planes (See Figure \ref{fig:initDiagram}) defined by the bias vectors of single layer ReLU networks correspond to the facets of a CPWL finite element mesh. This implies that to obtain a ``feature-rich'' initial basis, assuming the data input is normalized to $[0,1]^d$, one should scatter the cut planes of the ReLU functions over $[0,1]^d$ randomly.  

Loosely speaking, if the above initialization results in hidden layer with  ``feature-rich'' output, it is reasonable to speculate that composing two such layers has a good chance to also result in a ``feature-rich'' output, provided the first layer maps \emph{into} the domain of the second layer and is as close as possible to being \emph{onto}. 
The idea behind the ``box initialization'' for plain networks is to normalize the \emph{output} of each layer to $[0,1]^d$. The goal is to apply the above initialization inductively for each hidden layer and prevent ``blow-up'' of the initial basis for deeper networks. \edit{In the remainder of this section, we consider neural network architectures in which the width of the hidden layers is a constant $w$ throughout the network. This simplifies the analysis, although the algorithm can be considered for networks with variable hidden layer width $w_l$, $l = 1, ..., L$; see Appendix \ref{algorithm_appendix}.}

\begin{figure}
  \begin{minipage}[c]{0.25\textwidth}
    \includegraphics[width=\textwidth]{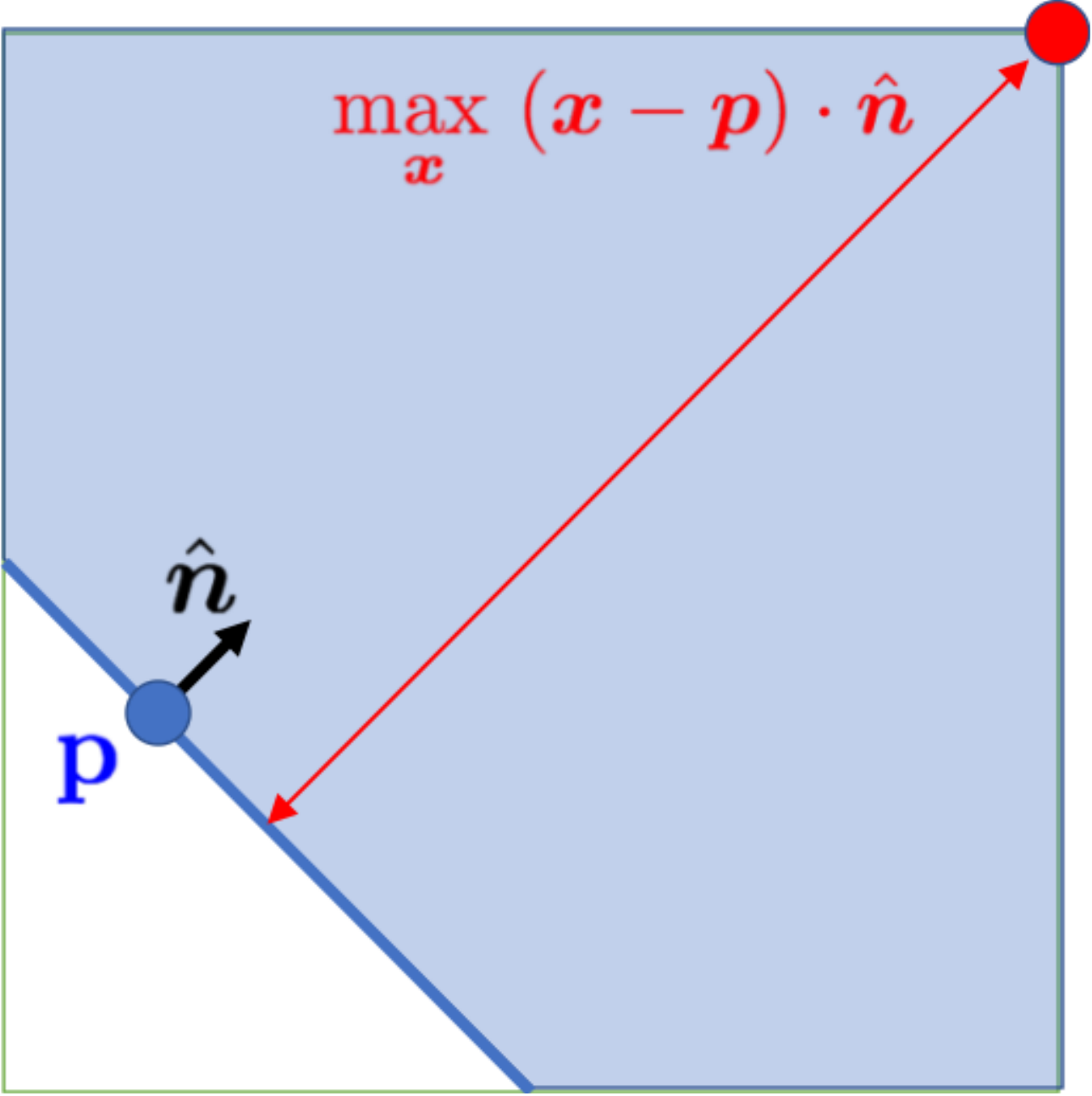}
    \end{minipage}\hfill
      \begin{minipage}[c]{0.66\textwidth}
    \caption{Notation used in the ``box initialization'' of each node. A random point $\bm{p}$ with random orientation $\hat{\bm{n}}$ is used to define a ReLU function of form $\sigma(k (\bm{x} - \bm{p})\cdot\hat{\bm{n}})$. Using Lemma \ref{maurosTrick}, one may choose the slope of the ReLU $\alpha$ to impose an upper bound on the output of each layer. We refer to the hyperplane normal to $\hat{\bm{n}}$, where the ReLU ``switches on'', as the \emph{cut plane}.}
    \label{fig:initDiagram}
  \end{minipage}
\end{figure}
Referring to Fig.~\ref{fig:initDiagram}, the procedure is for each output row ($1\ldots i \ldots w$) of the layer:
\begin{enumerate}[noitemsep,nolistsep]
\item
\edit{Select $\bm{p} \in [0,1]^w$ at random.} 
\item
Select a normal $\bm{n}$ at $\bm{p}$ with random direction.
\item
Choose a scaling $k$ such that 
\begin{equation}
\underset{{\bm{x}\in}[0,1]^w}{\text{max }}\sigma(k(\bm{x}-\bm{p})\cdot \bm{n}) = 1. 
\label{eqn:plain-init-assumption}
\end{equation}
\item
Row $\bm{w}_i$ of $\bm{W}^{\bm{\xi}}$ and $\bm{b}^{\bm{\xi}}$ are selected
as $b_i = k\bm{p}\cdot\bm{n}$ and $\bm{w}_i = k\bm{n}^T$.
\end{enumerate}
%\FloatBarrier
\edit{To initialize the first hidden layer, replace $w$ by the input dimension $d$ in steps 1 and 3 above.}
A full description of this initialization and an efficient way to calculate the $k$ may be found in Algorithm~\ref{alg:dense-init} in Appendix \ref{algorithm_appendix}.
With the layer initialized as above, consider feeding a box $[0,1]^w$ as input into a given layer.
For a plain neural network, the output $\bm{x}_{l+1}$ of layer $l$ is given by 
\begin{equation}
\bm{x}_{l+1} = \sigma({\bm{W}_l} \bm{x}_{l} + \bm{b}_{l}).
\end{equation}
Then, we have for every component $i \in \{ 1, 2, \hdots, w\}$,
\begin{equation}\label{eqn:plain-min-max}
\underset{\bm{x}^l \in [0,1]^w}{\text{min }} (\bm{x}_{l+1})_i = 0; \quad
\underset{\bm{x}^l \in [0,1]^w}{\text{max }} (\bm{x}_{l+1})_i = 1.
\end{equation}
Equation~\ref{eqn:plain-min-max} implies that layer $l$ maps $[0,1]^w$ into $[0,1]^w$. Moreover, ensuring the extrema are achieved on $[0,1]^w$ guarantees its image intersects each side of the hypercube at least at a point.
This does not imply however, that each layer map from $[0,1]^w$ into $[0,1]^w$ is onto. Nor, as we will see, that the the composition of two layer maps will have guaranteed intersections with the boundary. Assuming the input into the first hidden layer is contained in $[0,1]^w$, then box initializion ensures that the hidden layers initially map
\begin{equation}
[0,1]^d 
\xrightarrow[]{\text{into}}
[0,1]^w 
\xrightarrow[]{\text{into}}
\left[0,1\right]^w 
\xrightarrow[]{\text{into}}
\left[0,1\right]^w 
\xrightarrow[]{\text{into}}
\hdots
\xrightarrow[]{\text{into}}
\left[0,1\right]^w.
\end{equation}

In Figure~\ref{fig:relu_shallow_initialization} we compare the the box initialization for a plain ReLU network with width $w=32$ against the He (see \cite{he2015delving}) initialization for approximating $\sin(2\pi x)$ on $[0,1]$. We average over 16 independent training runs. The box initialized basis is significantly richer for up to 8 layers, yielding a loss 2-4 orders of magnitude lower than that of the initialized He basis after the first least squares step. This is borne out by the plots of the initialized basis in Fig. \ref{fig:basis_plain}. The loss after $10^4$ LSGD steps is also lower by 2 orders of magnitude. Despite this promising improvement over He initialization, the box-initialized ReLU network with 16 layers fails to train, and plotting the basis function reveals they are constant over the input to the network;
see Appendix \ref{basis_appendix}. 
\begin{figure}
    \centering
    \includegraphics[height=1.8in]{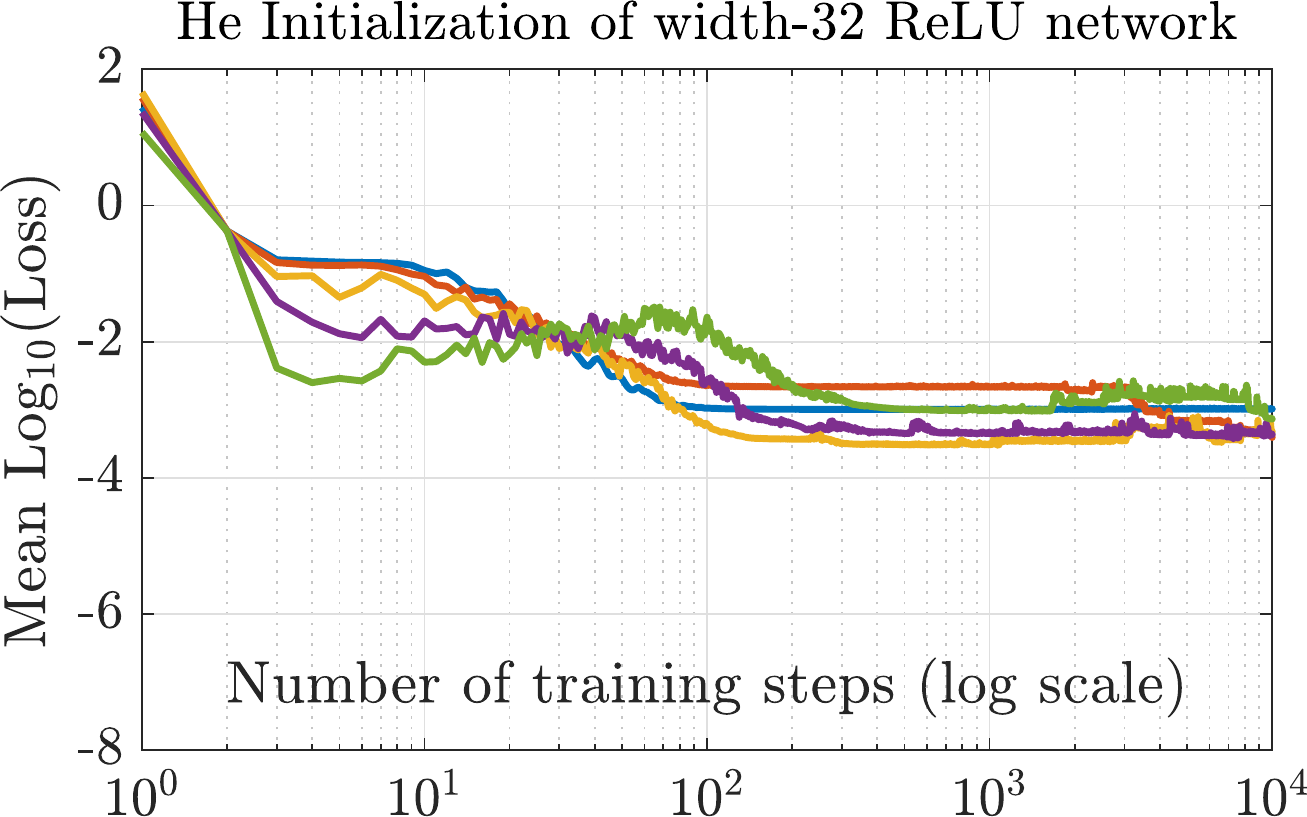}
    \includegraphics[height=1.8in]{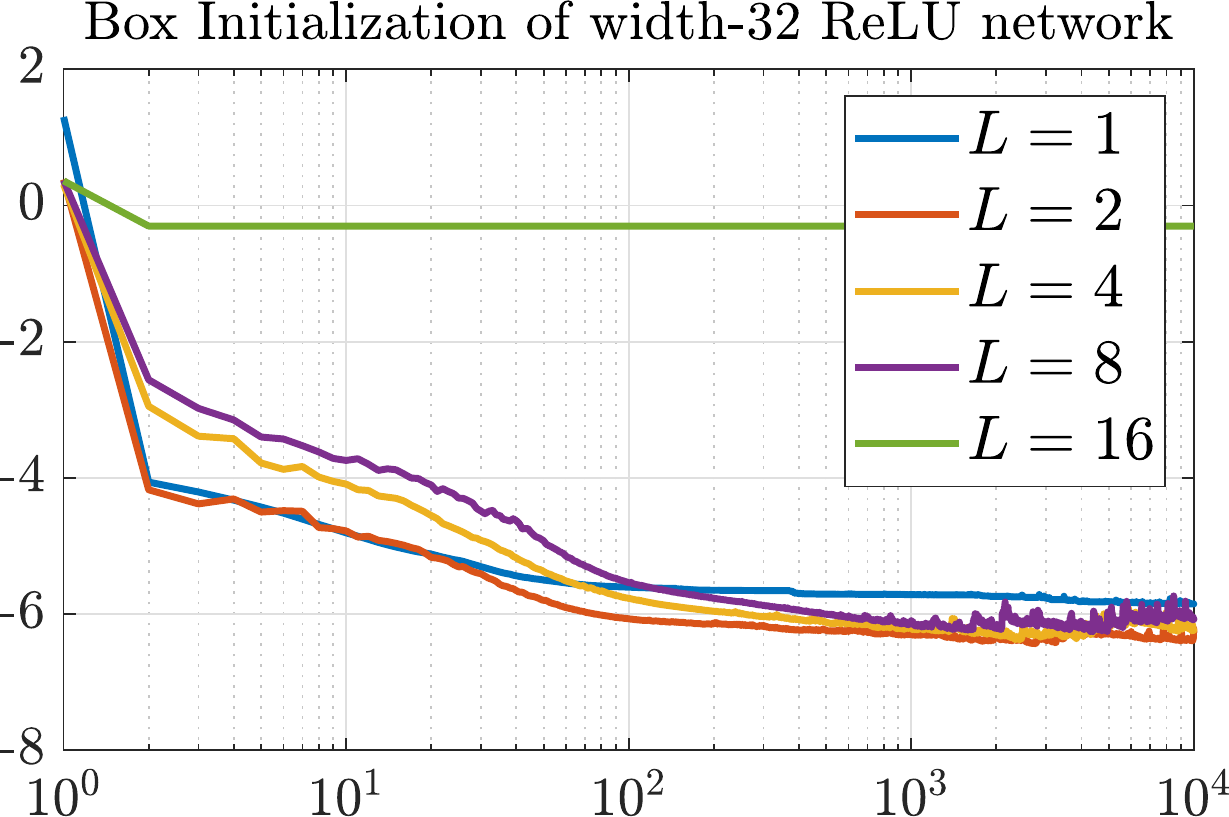}
    \vspace{-2ex}
    \caption{Mean of $\log_{10}(\text{Loss})$ over 16 training runs of plain width-32 ReLU network with $L = 1, 2, 4, 8$ and $16$ hidden layers for the He \emph{(left)} and Box \emph{(right)} initializations. The learning rate is 0.005 throughout.}
    \vspace{-2ex}
    \label{fig:relu_shallow_initialization}
\end{figure}

To understand why this occurs, consider
the image $P_L$ of the unit box $[0,1]^w$ under $L$ hidden layers of the network, excluding the $d$-dimensional input layer for now. 
Fig. \ref{fig:plain_image} shows the evolution of $P_L$ through each layer for different initialization approaches.
Because each hidden layer does not map $[0,1]^w$ \emph{onto} $[0,1]^w$, as the number of layers increases, we expect $P_L$ to shrink, lose dimension, and eventually collapse to a point. 
In turn, for input dimension $d \le w$, the image of the input box $[0,1]^d$ is a submanifold of $P_L$, given by the parametrization $(\Phi_1(\bm{x}), \Phi_2(\bm{x}), ... \Phi_w(\bm{x}))$ for $\bm{x} \in [0,1]^d$. 
For example, for the DNNs shown in Fig. \ref{fig:plain_image}, with a one-dimensional input this submanifold would be a curve within $P_L$.
The basis function $\Phi_i$ is the projection of this submanifold onto the $i$th coordinate axis\edit{; this is illustrated for a width $w=2$ network with input dimension $d = 1$ in Fig. \ref{fig:box_mapping}.} Once the image $P_L$ is a point, this submanifold within $P_L$ is also a point, so all initial $\Phi_i$ will be constant. Fig. \ref{fig:plain_image} demonstrates this for a width-two ReLU network; the He, Glorot, and box initialization suffer from this flaw in the plain network case. While the growth in the magnitude of the basis is controlled (as expected) by box initialization, and the support of the basis does not collapse as quickly in this instance, a statistical study of this approach will indicate that the collapse to a point for all three initializations is inevitable. One possible treatment of this collapse has been proposed in~\cite{lu2019dying}. \edit{Issues of training DNNs have also been discussed by \citet{hanin2018start}, who proposed a scaling of depth to width as a possible solution.}   
Next, we illustrate how ResNets avoid this issue at higher depth, and propose an analogous box initialization for ResNets.

\begin{figure}[t!]
    \centering
    \includegraphics[width=0.66\textwidth]{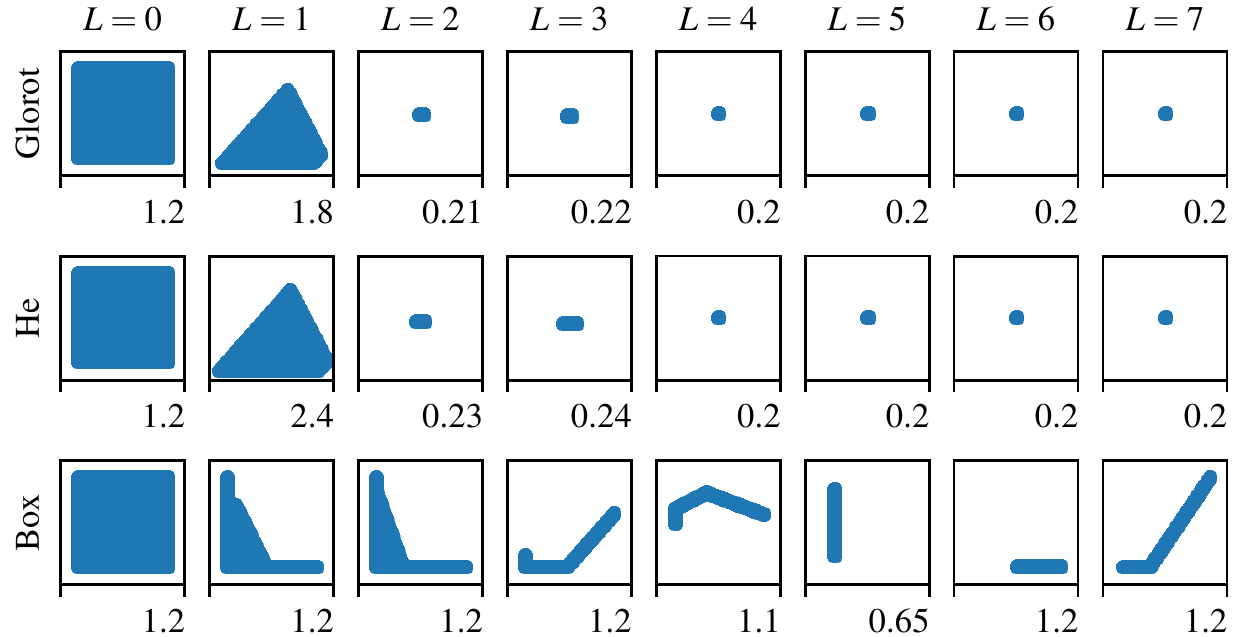}
        \vspace{-3ex}
    \caption{
Images $P_L$ of the unit square $[0,1]^2$ under $L$ initialized hidden layers of plain networks for He (\textit{top}) and Box (\textit {bottom}) initializations. 
    Values are presented on the square $\left[-0.2,H\right]^2$, where $H$ is denoted to the bottom-right of each image. Collapse to a point corresponds to constant basis functions.
} 
    \label{fig:plain_image}
\end{figure}

\begin{figure}
    \centering
    \includegraphics[width=0.75\textwidth]{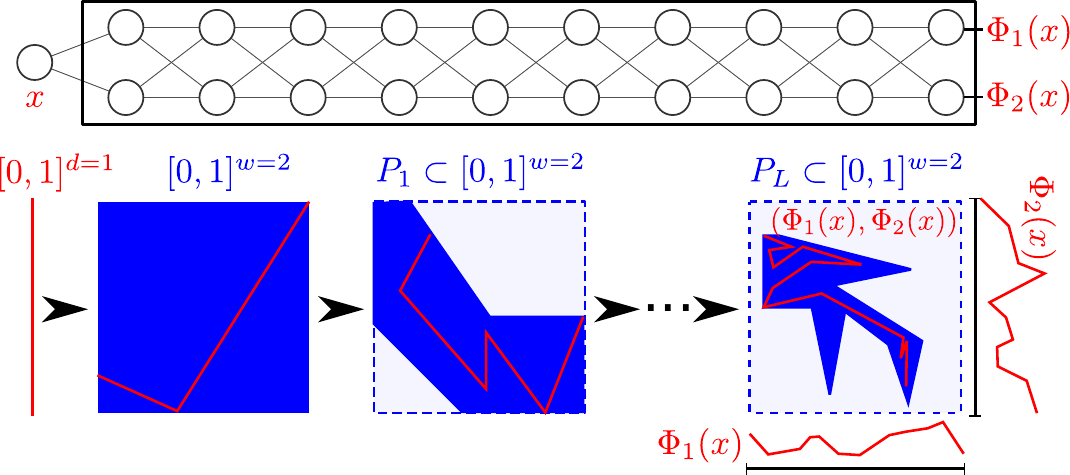}
        \vspace{-3ex}
    \caption{
    \edit{Artist's depiction of the $d$-dimensional manifold (\emph{red}) parametrized by $(\Phi_1(\bm{x}), ..., \Phi_w(\bm{x}))$, which is the image of the input domain $[0,1]^d$ under the input and hidden layers, as a submanifold of the image $P_L$ (\emph{blue}) of the unit box $[0,1]^w$ under the hidden layers. Here $d=1$ and $w=2$ to make visualization possible.}
    } 
    \label{fig:box_mapping}
\end{figure}

%\FloatBarrier

\subsection{Residual Neural Networks (ResNets)}
\label{resnet_init_subsec}
Consider a residual neural network with input dimension $d$ and hidden layer width $w$. \edit{As usual for a ResNet, unless $d=w$, the first hidden layer is initialized as plain layer as described in Section \ref{plain_init_subsec} above. Then, for the remaining hidden layers, } to initialize the neuron $i$, $1 \le i \le w$,
\begin{enumerate}[noitemsep,nolistsep]
\item
For $m$ specified later, select $\bm{p} \in [0,m]^w$ at random. 
\item
Select a unit normal $\bm{n}$ at $\bm{p}$ with random direction.
\item
For $\delta$ specified later, choose a scaling $k$ such that 
\begin{equation}
\underset{[0,m]^w}{\text{max }}\sigma(k(\bm{x}-\bm{p})\cdot \bm{n}) = \delta m.
\end{equation}
We again apply Lemma \ref{maurosTrick} to find the maximal corner.
\item
Row $\bm{w}_i$ of $\bm{W}^{\bm{\xi}}$ and $\bm{b}^{\bm{\xi}}$ is selected
as $b_i = k\bm{p}\cdot\bm{n}$ and $\bm{w}_i = k\bm{n}^T$.
\end{enumerate}
As for the plain DNN initialization, a more detailed description of the weight and bias initialization procedure can be found in Algorithm~\ref{alg:dense-resnet} in Appendix \ref{algorithm_appendix}.
With the layer initialized as above, consider feeding a box $[0,m]^w$ as input into a given layer.
For a residual neural network, the output $\bm{x}^{l+1}$ of layer $l>1$ is given by 
$\bm{x}_{l+1} = \bm{x}_{l} + \sigma({\bm{W}_l} \bm{x}_{l} + \bm{b}_{l})$ while for the first layer we have $\bm{x}_{2} = \sigma({\bm{W}_1} \bm{x}_{1} + \bm{b}_{1})$.
Then, we have for every component $i \in \{ 1, 2, \hdots, w\}$, $l>1$
\begin{align}
\underset{\bm{x}_l \in [0,m]^w}{\text{min }} (\bm{x}_{l+1})_i &\ge
\underset{\bm{x}_l \in [0,m]^w}{\text{min }} (\bm{x}_l)_i + \underset{\bm{x}_l \in [0,m]^w}{\text{min }} 
\sigma(k(\bm{x}_l-\bm{p})\cdot \bm{n})
\ge \underset{\bm{x}_l \in [0,m]^w}{\text{min }} (\bm{x}_l)_i \ge 0 \\
\underset{\bm{x}_l \in [0,m]^w}{\text{max }} (\bm{x}_{l+1})_i &\le 
\underset{\bm{x}_l \in [0,m]^w}{\text{max}} (\bm{x}_l)_i + \underset{\bm{x}_l \in [0,m]^w}{\text{max }}
\sigma(k(\bm{x}_l-\bm{p})\cdot \bm{n}) 
\le m + m\delta .  
\end{align}
Thus, layer $l$ maps $[0,m]^w$ into $[0,m(1+\delta)]^w$ permitting some growth specified by $\delta$. Assuming the input into the first hidden layer is contained in $[0,1]^w$, initializing the hidden layers with $\delta = \frac{1}{L}$ leads to a network that maps
\begin{equation}
[0,1]^d 
\xrightarrow[]{\text{into}}
[0,1]^w 
\xrightarrow[]{\text{into}}
\left[0,1+\frac{1}{L}\right]^w 
\xrightarrow[]{\text{into}}
\left[0,\left(1+\frac{1}{L}\right)^2\right]^w 
\xrightarrow[]{\text{into}}
\hdots
\xrightarrow[]{\text{into}}
\left[0,\left(1+\frac{1}{L}\right)^{L-1}\right]^w.
\end{equation}
This implies the final output of the hidden layer is contained in the box $[0, e]^w$; in other words, the values of each basis function are contained in $[0,e]$. 
Thus, \edit{we apply steps (1) -- (4)} with parameters 
\begin{equation}
\delta=1 \text{ and } m=1 \text{ for } l=1; \quad \delta = \frac{1}{L} \text{ and }
m = \left( 1 + \frac{1}{L} \right)^{l-1} \text{for } l > 1,
\end{equation}
\edit{and refer to this as the box initialization for ResNets.}

An interesting observation regarding the \edit{box initialization for ResNets} is its connection to the recently
developed ODE\edit{-}based \edit{DNN} architectures of ~\cite{haber2017stable} and~\cite{chen2018neural}. In those \edit{works}, an $O\left(\frac{1}{L}\right)$ \edit{temporal} step size scales the activation function, where $L$ is the number of \edit{time steps}. This ensures that the growth of the network features is a function of the length of the time interval (assuming bounded weights and biases). This is \edit{analogous} to \edit{the properties of} the \edit{ResNet box} initialization \edit{shown above}. An important difference\edit{,} however, is that the \edit{ODE-based} architectures retain \edit{this} scaling \edit{throughout} the training process. 

We compare the use of the box initialization for a residual neural network with hidden layer width 32 against the He initialization in Fig. \ref{fig:relu_deep_initialization} for approximating $\sin(2\pi x)$ on $[0,1]$. We average over 16 independent runs. The box initialized basis is again richer than the He basis and yields an initial LS loss consistently 4 orders of magnitude lower. The loss during training exhibits similar improvements over the He basis. At 128 layers, it is now the He basis which fails to train. 
\begin{figure}
    \centering
    \includegraphics[height=1.8in]{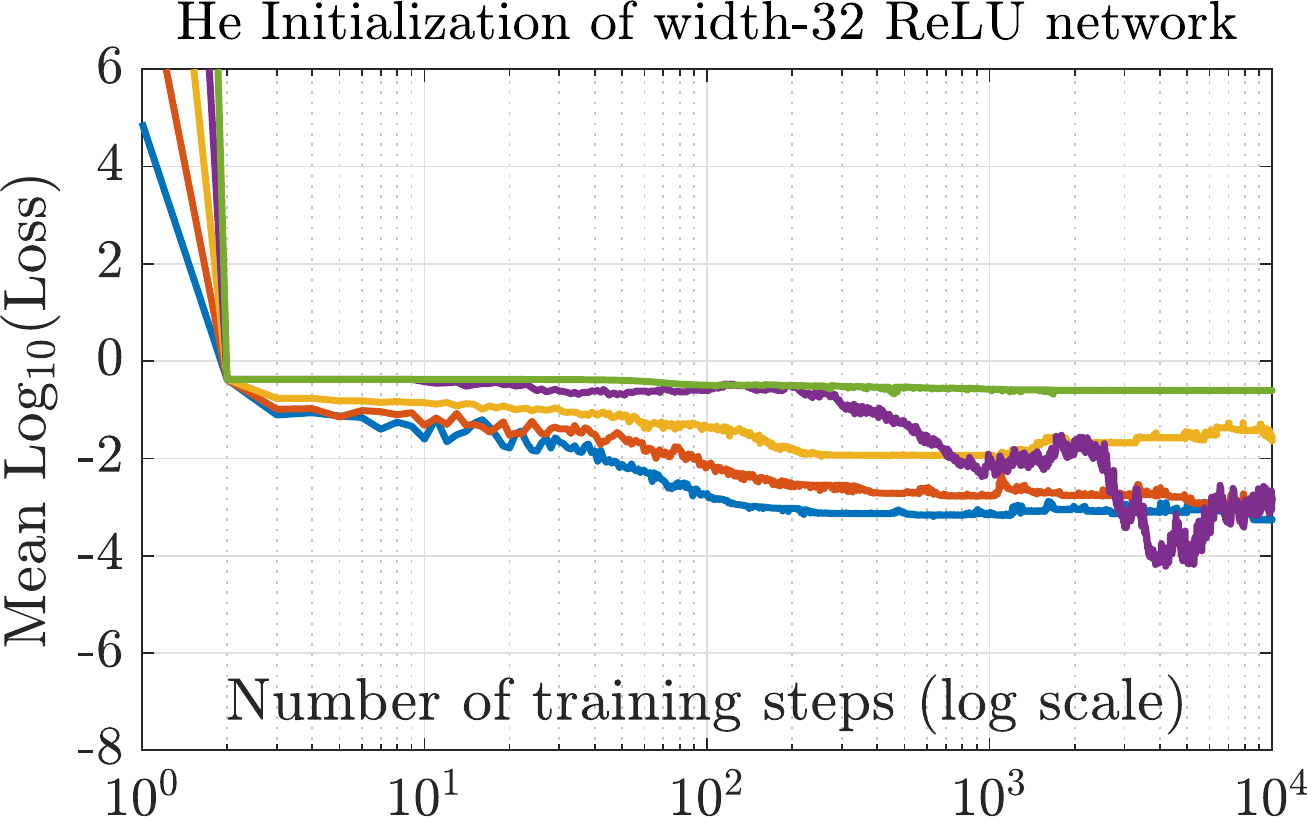}
    \includegraphics[height=1.8in]{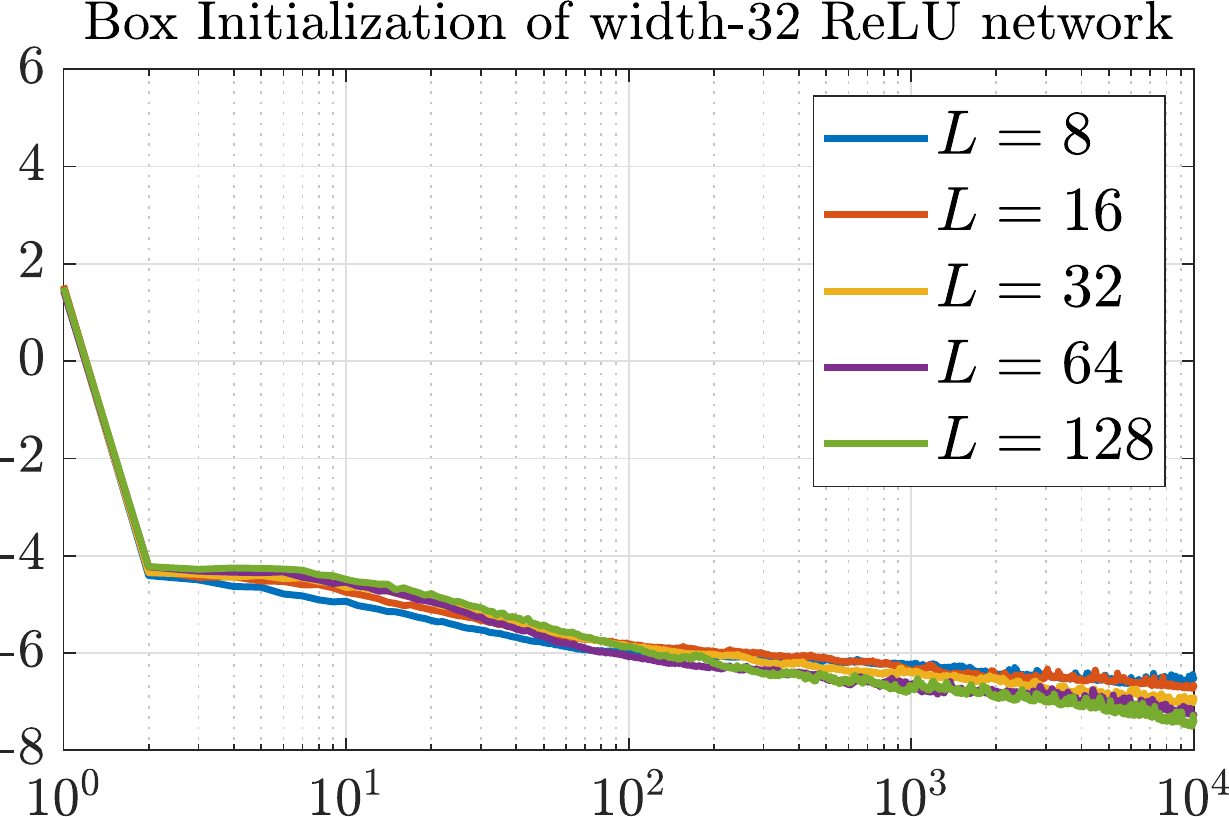}
        \vspace{-2ex}
    \caption{Mean of $\log_{10}(\text{Loss})$ over 16 training runs of residual width-32 ReLU network with $L = 8, 16, 32, 64$ and $128$ hidden layers and training rate $2^{-(k+3)}$ for the He \emph{(left)} and Box \emph{(right)} initializations.}
    \label{fig:relu_deep_initialization}
\end{figure}

The advantages of the box initialization over the He initialization can be illustrated for a width-2 network by again studying the image \edit{$P_L$} of the unit square $[0,1]^2$ under both initialization in Fig. \ref{fig:resnet_image}. Note that the image of the square never collapses to a point due to the ResNet architecture, regardless of initialization. Hence, the initialized basis will not consist of constant functions. This is a new interpretation of the stability provided by residual neural networks\edit{; for other perspectives, see \citet{hanin2018start}, \citet{haber2017stable}, and \citet{he2016deep}.} Nevertheless, both the Glorot and the He initialization exhibit different pathologies in the ResNet case as depth increases: blow-up of the basis function magnitudes and convergence of the image \edit{$P_L$} to lines through the origin. The latter property implies linearly dependent basis functions $\phi_1 = C\phi_2$, again resulting in a decreased expressive power of the initialized basis. All of these properties are illustrated in the basis function plots in Fig. \ref{fig:basis_plain} in Appendix \ref{basis_appendix}. The ResNet box initialization, however, exhibits both the boundedness of \edit{$P_L$} proven above and a remarkable preservation of the area of \edit{$P_L$} as depth increases. We have not yet found an explanation for the latter property, but these results explain 
the benefits of the box-initializion for deep networks observed in Fig.~\ref{fig:relu_deep_initialization}. 

We observe similar properties of the ResNet Box initialization in higher dimensions as well. In Fig.~\ref{fig:resnet_image} we examine the eigenvalues of the covariance of the image of a set of input points sampled from $\mathcal{U}[0,1]^w$ through networks of increasing depth. We find that for the Glorot and He initializations, the ratio between the smallest and largest eigenvalues quickly become zero with increasing depth. This suggests that one basis function becomes linearly dependent upon the others with only a few layers. Worse, the ratio between the second largest and the largest eigenvalues eventually becomes zero, suggesting that the basis functions all become linearly dependent. In contrast, neither ratio tends toward zero for the Box initialization, indicating that the basis functions remain independent, even for very deep networks.

\section{Applications}
\subsection{One-dimensional regression}
\begin{figure}
    \centering
    \includegraphics[width=.6\textwidth]{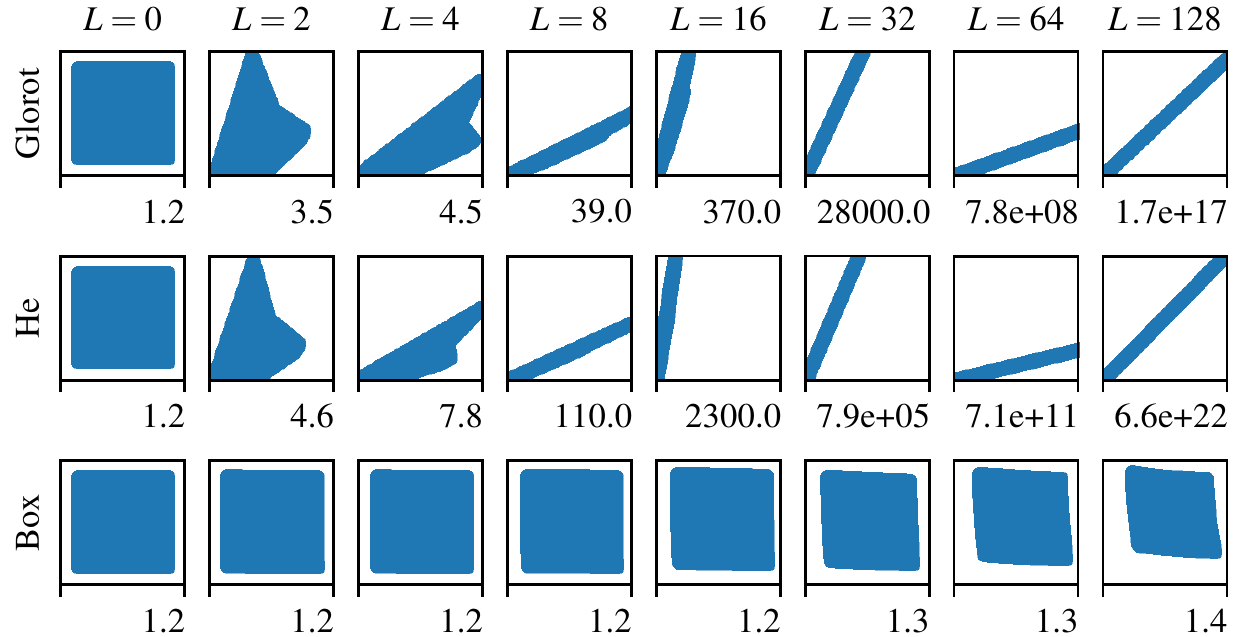}
    \hfill
    \includegraphics[width=0.39\textwidth]{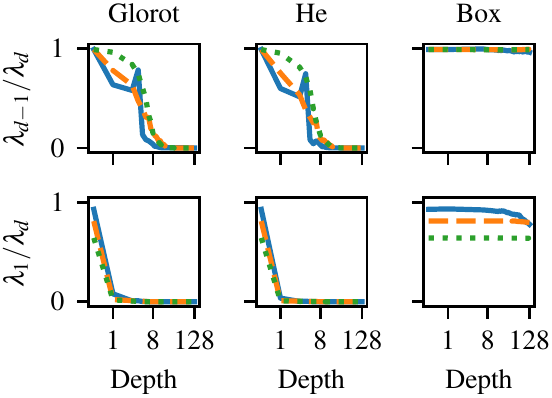}
    \vspace{-2ex}
    \caption{(\textbf{Left Subfigure:})
    Images of the unit square $[0,1]^2$ under $L$ initialized hidden layers of ResNets for Glorot (\textit{top}), He (\textit{center}) and Box (\textit {bottom}) initializations. 
    Values are presented on the square $\left[-0.2,H\right]^2$, where $H$ is denoted to the bottom-right of each image. Collapse to a line through the origin corresponds to linearly dependent basis functions (i.e., $\phi_1 = C \phi_2$)     (\textbf{Right subfigure:})
    Ratio of the second largest to largest eigenvalue (\emph{top}) and smallest to largest (\emph{bottom}) of the covariance of the image of samples from $\mathcal{U}[0,1]^d$.
    Results are shown for the ResNet architectures using dimensions $w=d=8$ \sym{line}{C0}, \edit{$32$} \sym{dash}{C1}, and \edit{$128$} \sym{dot}{C2}.
    } 
    \label{fig:resnet_image}
        
\end{figure}
In this section, we compare the behavior of the Glorot, He, and Box initializations for regression \edit{using ``very deep'' ReLU networks}. We first consider regression on the discontinuous function,

\begin{equation} \label{eqn:pp1d}
u_1(x) = \begin{cases} 
      x & 0\leq x< 0.5 \\
      1 - \frac{3}{4} x^2 & 0.5\leq x\leq 1
   \end{cases}.
\end{equation}

With a network width of \edit{$w=2$}, the three initializations, both Plain and ResNet architectures, and varying depths, we use the LSGD method to fit $u_1$.  Our results are shown in Figure~\ref{fig:padp} using an ensemble of initial random seeds for each initialization, architecture, and depth. Due to the narrow width of these network, only deep networks are capable of providing good approximations to $u_1$. However, we find that the Glorot and He initializations fail to find good fits to $u_1$
regardless of the architecture used.
\edit{While the Box initialization in the plain architecture also results in a poor fit, in the ResNet architecture it demonstrates a significant statistical improvement in training outcome.}

Our observations in Figure~\ref{fig:plain_image} and \ref{fig:resnet_image} suggest that the combination of initialization and architecture can lead to a starting condition in which the span of the basis functions is limited. The results in Figure~\ref{fig:padp}, related to problem~\ref{eqn:pp1d}, indicate that it is difficult to escape from this poor initial starting condition to a good fit. However, the Box initialization for the ResNet does not suffer from this lack of initial expressivity in the basis functions, and we are able to observe improvements increasing the depth of the network.

We next apply the Box initialization for ResNet to regress both $u_1$ and a smooth function, $u_2(x) = \sin\,2 \pi x$ for varying widths and depths. We observe first order convergence for the smooth function with respect to both width and depth, but only realize convergence with respect to width for the discontinuous functions (Figure \ref{fig:depthwidth}). 

\begin{figure}[htpb!]
    \centering
    \includegraphics[width=\textwidth]{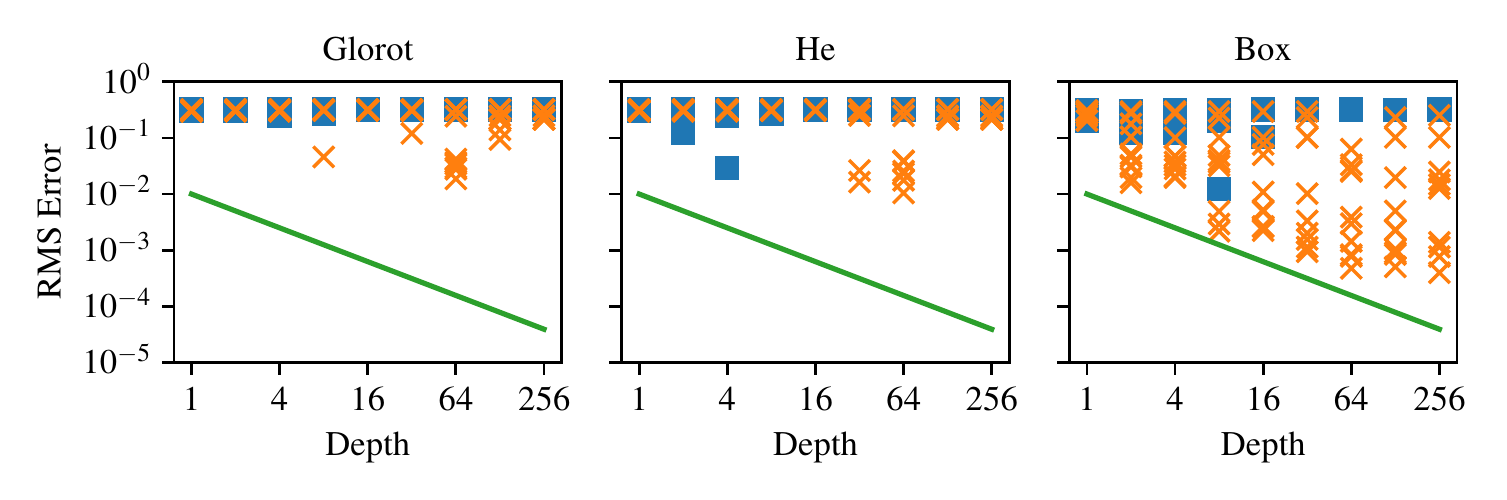}
    \vspace{-6ex}
    \caption{RMS Error for 1D piecewise polynomial polynomial regression (Equation~\ref{eqn:pp1d}) using the three initializations with Plain \sym{s}{C0} and ResNet \sym{x}{C1} architectures, respectively. Each symbol corresponds to the loss achieved using a different random seed for initialization. The green line \sym{line}{C2} indicates first order convergence with respect to depth. Setting: ReLU activation function, network width = 2, learning rate = 0.005.}
    \label{fig:padp}
\end{figure}

\begin{figure}[t!]
    \centering
    \includegraphics[width=0.49\textwidth]{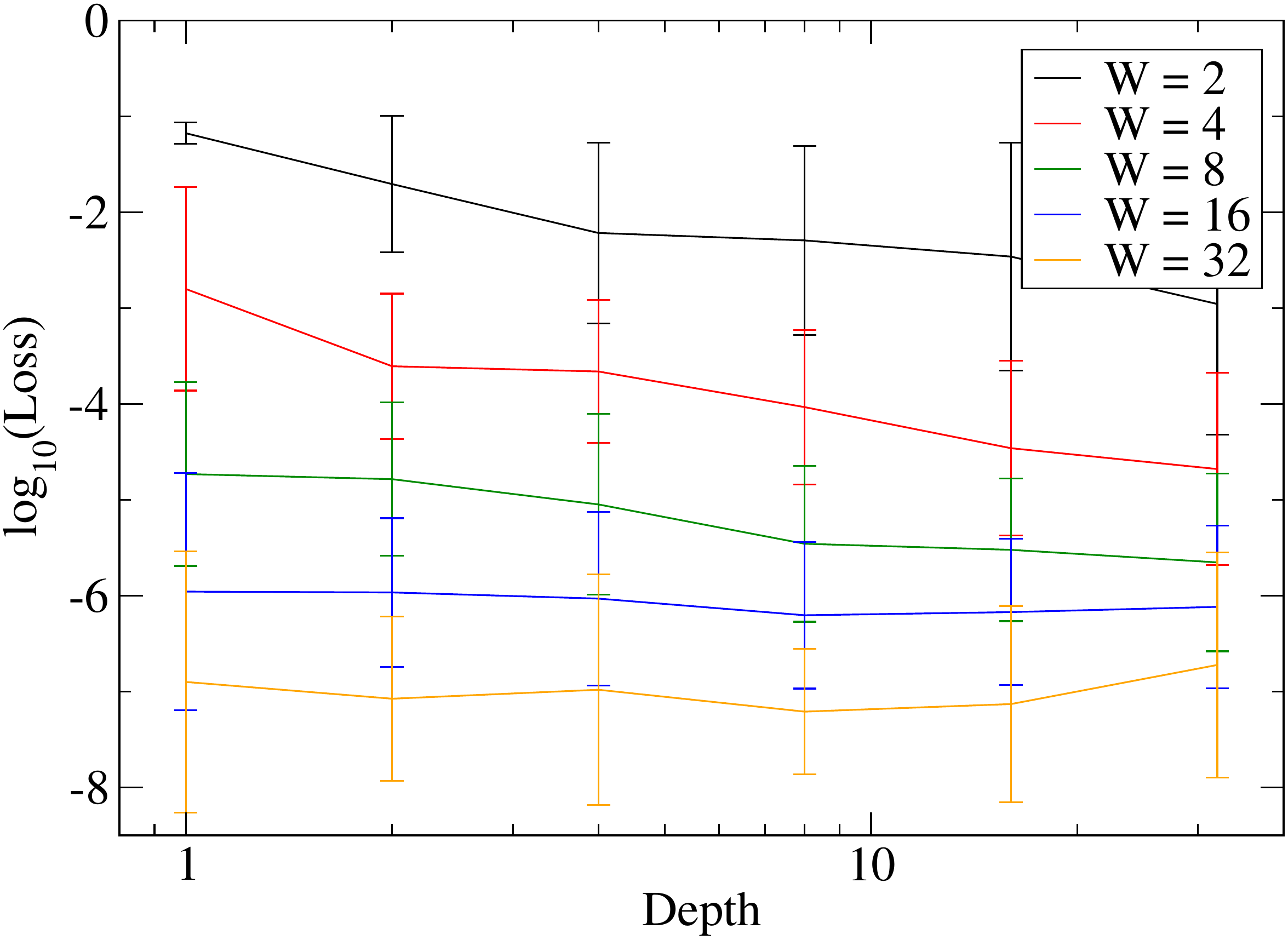}
    \includegraphics[width=0.49\textwidth]{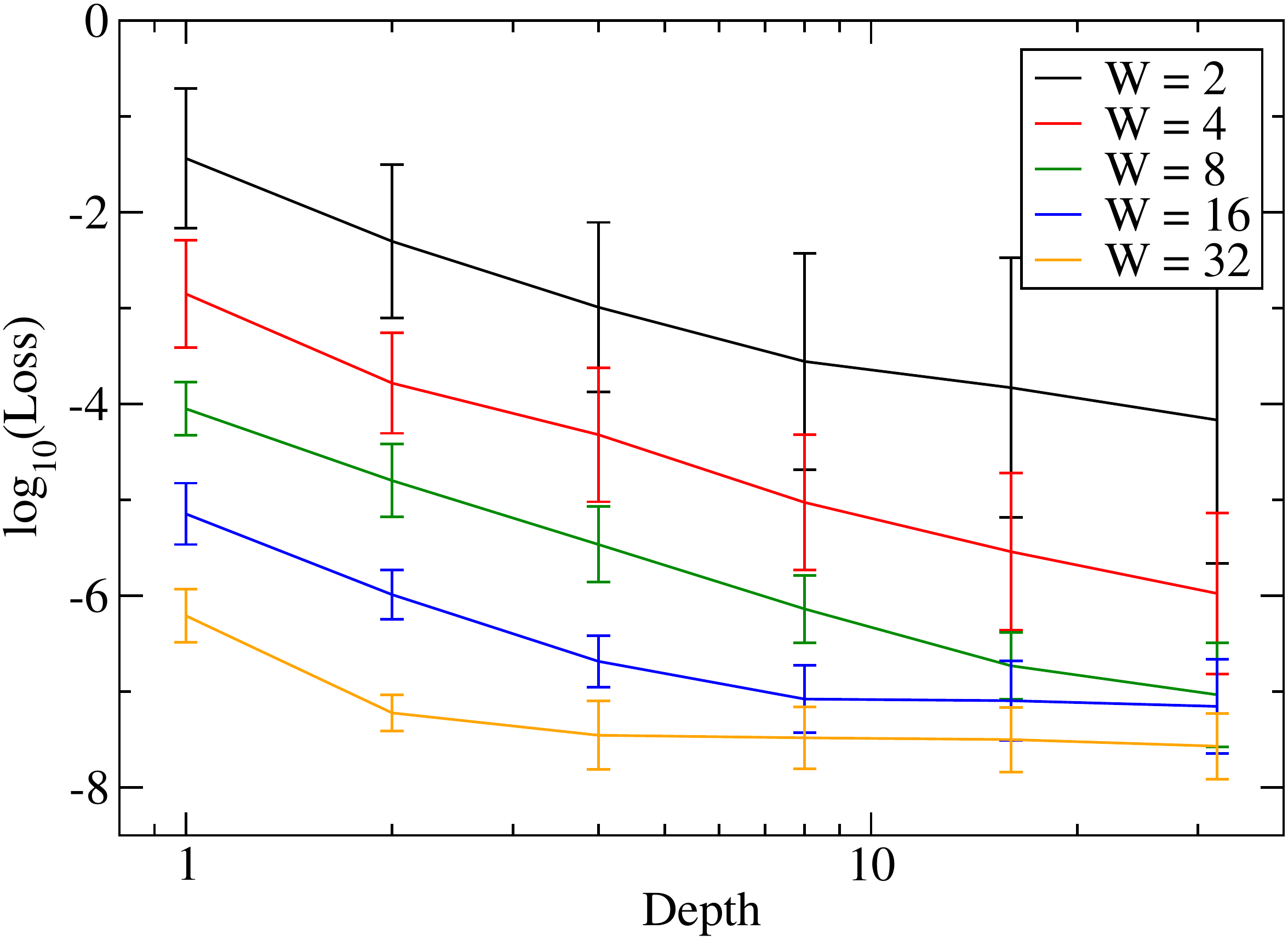}
    \vspace{-3ex}
    \caption{Convergence studies of regression with respect to width and depth on Equation~\ref{eqn:pp1d}  (\textit{left}) and  $u=\sin 2 \pi x$ (\textit{right}). Setting: ReLU activation function, ResNet architecture.}
    \label{fig:depthwidth}
\end{figure}

\subsection{Multi-function Regression}
\label{sec:multifunction}
The regression problem described above learns \edit{a basis that adapted to a} single function. In this section we modify the loss function so that the basis is defined to approximate a set of $N$ functions:
\begin{equation}
\label{multifunction_loss}
    \underset{\bm{\xi}^{\text{L}}}{\text{argmin }} 
\sum \limits_{n=1}^N   \left \| u_n -
 \sum_i \bm{\xi}_{n,i}^{\text{L}} \Phi_i(\bm{x},\bm{\xi}^{\text{H}})
\right \|_{\ell_2(\mathcal{X})}^2.
\end{equation}
Here the target functions are denoted $u_n$, and each has a corresponding set of linear coefficients $\bm{\xi}_{n,\cdot}^{\text{L}}$. The basis functions are defined by a single set of nonlinear weights $\bm{\xi}^{\text{H}}$, that define the output of a neural network as in single function regression described by Equation~\ref{eqn:std-regression}.

\begin{figure}[htpb!]
    \centering
    % note that there also EPS versions of these, I (ECC) just don't like them as much
    \includegraphics[width=0.45\textwidth]{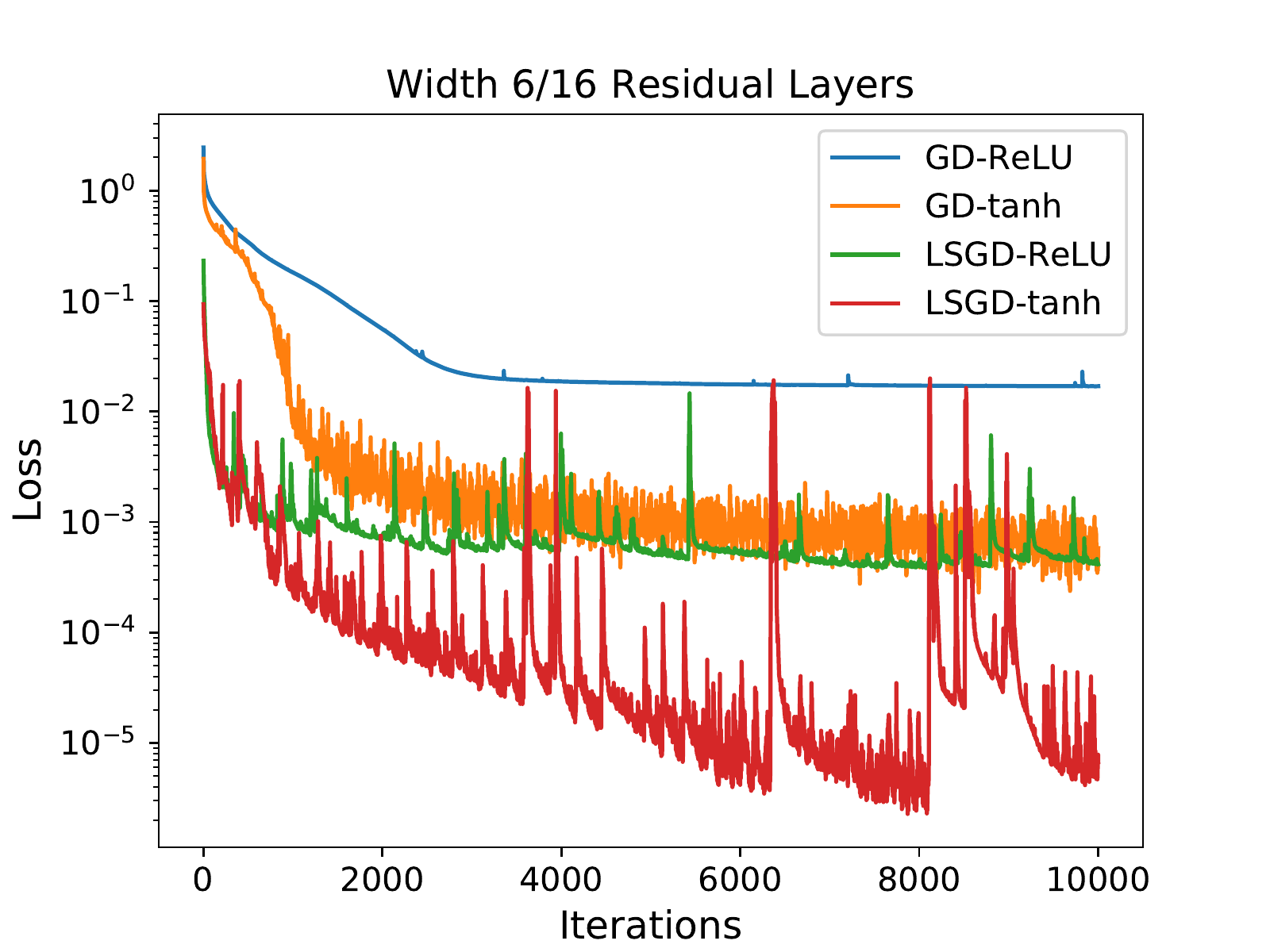}
    \includegraphics[width=0.45\textwidth]{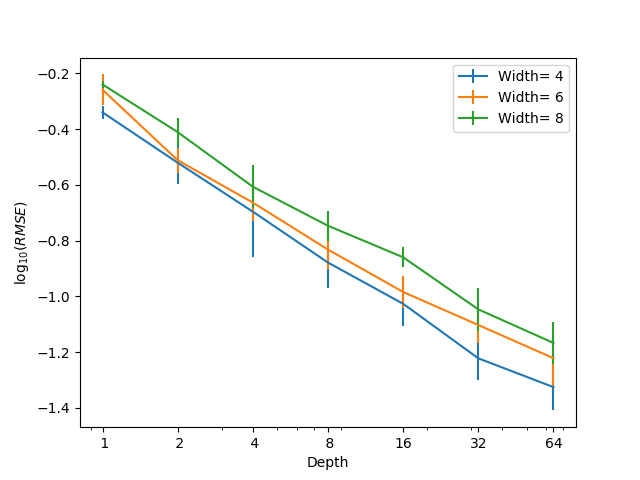}
    \vspace{-2ex}
    \caption{The left image shows the convergence of the loss of several multi-function regression problems with $6$ Legendre polynomials. Networks with $16$ residual layers and width $6$ using ReLU and $\tanh$ activation functions are used. Clearly, the LSGD training algorithm achieves smaller losses and converges more rapidly than GD. The right image shows the convergence as a function of depth of the RMS error. Note that as the width growths, so do the number of Legendre polynomials used in the objective function.
    }
    \label{fig:mo-loss-convergence}
\end{figure}

\edit{Our interest in multi-function regression lies in the fact that the adaptive basis representation of a DNN \eqref{eqn:NNbasis} exposes the problem \eqref{eqn:std-regression} as seeking a best $w$-term approximation to $u$ in the $\ell_2(\mathcal{X})$ norm. This is a form of nonlinear approximation that includes, e.g., wavelet and free-knot spline approximation \citep{cohen2009compressed, devore1998nonlinear}.
Here, the terms in the approximation $\Phi_i$, $i = 1, ..., w$ belong to the class of depth $L-1$ DNNs with input dimension $d$, output dimension 1, and nonlinear in the final layer; see Fig. \ref{fig:basisComparison}.
The multi-function regression problem \eqref{multifunction_loss} therefore appears closely related to nonlinear $w$-widths in approximation theory \citep{devore1989optimal}, and has potential for a reduced order modeling strategy \citep{hesthaven2016certified} in which subspaces are found as the span of $\{\Phi_i\}_{i=1}^{w}$ to minimize a loss function of the form \eqref{multifunction_loss} given a large collection of data $\{u_n\}$. While the benchmarks considered below are considerably simpler than such an application, this represents a promising direction for future work.}

A multi-regression problem is solved targeting the Legendre polynomials in $L^2([0,1])$, normalized to ensure equal weighting in the loss. The Legendre polynomials were chosen because of the range of structure in the set of polynomials. Note that the algorithm described above has not been modified to take advantage of their orthonormality. The left image in Fig.~\ref{fig:mo-loss-convergence} shows the convergence of networks with $16$ residual layers of width $6$ trained to match $6$ Legendre polynomials is studied (a one-to-one relationship between width and target functions). Here, the mean loss over $10$ repeated simulations is plotted as a function of iteration. The LSGD and GD training algorithms are compared. From the figure, LSGD reaches a smaller magnitude loss in fewer iterations than the equivalent network trained with GD. Furthermore, the usage of $\tanh$ leads to a smaller loss than with ReLU, thus better representing the set of Legendre polynomials. This is attributed to the broader support and greater smoothness of $\tanh$.

For the right image in Fig. \ref{fig:mo-loss-convergence} we use a ReLU ResNet with width $w$ to fit a space of Legendre polynomials of dimension $w$. For each realization we compute the error as the minimum over all iterations of the maximum RMS errors over the target polynomials and we then plot the mean RMS error over all the realizations. The learning rate for these simulations is set at $0.0005$. The image demonstrates that \edit{greater} accuracy is achieved 
as \edit{depth increases}.

\subsection{Physics-informed neural networks}\label{sec:pinns}
We consider now a physics-informed neural network (PINN) solution to the linear transport equation $\partial_t u(x,t) + a(x,t)\, \partial_x u(x,t) =0$ on the unit space-time domain $(x,t) \in [0,1]^2$, with initial condition $u(x,t=0)=u_0(x)$ and homogeneous Dirichlet boundary data $u(x=0,t)=0$. The loss function considered here is
\begin{align}
\begin{split}
    \mathcal{J} = \epsilon \mathcal{J}_1 + \mathcal{J}_2 +  \mathcal{J}_3, \qquad
    \mathcal{J}_1 = \frac{1}{N_1} \sum_{i \in \mathcal{X}_1} | \partial_t \mathcal{NN}_i + \partial_x a(x,t) \mathcal{NN}_i |^2, \\
        \mathcal{J}_2 = \frac{1}{N_2} \sum_{i \in \mathcal{X}_2} | \mathcal{NN}_i(x,0) - u_0 |^2, \qquad
            \mathcal{J}_3 = \frac{1}{N_3} \sum_{i \in \mathcal{X}_3} | \mathcal{NN}_i (0,t)|^2
\end{split}
\label{eqn:pinnLoss}
\end{align}
where $\mathcal{X}_1,\mathcal{X}_2$ and $\mathcal{X}_3$ are Cartesian point clouds with spacing $\Delta x$ on the interior, left and bottom boundaries, respectively. We note that the loss function is typically further augmented with a term to match given data (see e.g. \cite{raissi2019physics}), and PINNs thus amount to regularizing traditional regression with the least-squares solution of a collocation scheme using the neural network as basis. 
For all results we will use ResNets and consider as initial condition a tent function $u_0 \in C_0$. 

It is an open question how to choose the parameter $\epsilon$ scaling the first term of the $\mathcal{J}$ so that the three competing loss functions have the same magnitude under refinement - in the literature this penalty parameter is tuned to a given architecture to demonstrate good agreement, but preventing a formal convergence study. Traditionally in a FEM penalty method, one would scale by a mesh diameter $h$ so that each term in Equation \ref{eqn:pinnLoss} has consistent units, and comparable magnitude. In the current context, the adaptive basis has no inherent lengthscale, as the gradient of the basis may grow arbitrarily large as the hidden weights evolve and cut planes may approach each other. 

\begin{figure}[htpb!]
    \centering
    \includegraphics[width=\textwidth]{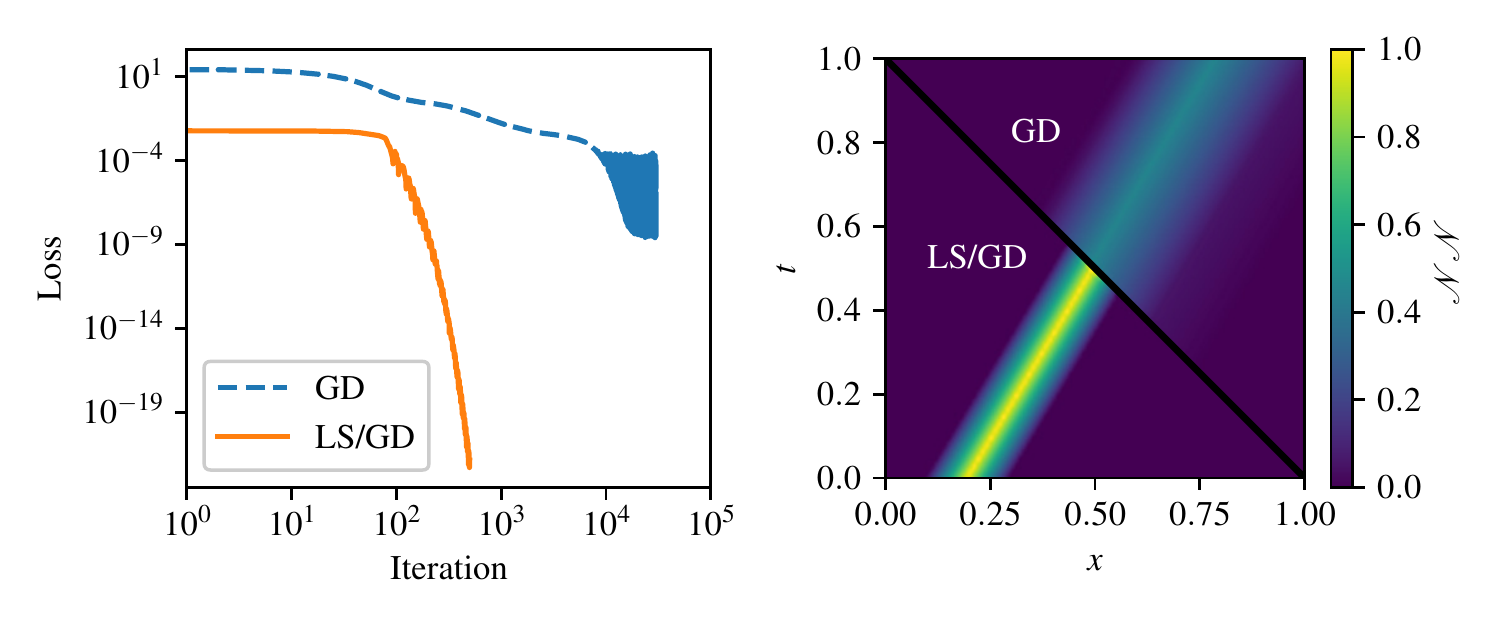}\\
    \vspace{-3ex}
    \caption{\textit{Left:} PINNs solution for transport equation with constant velocity. Loss evolution over training for GD and LSGD. \textit{Right:} Solution after 5000 iterations for GD and 500 iterations for LSGD. Setting: Box initialization, ReLU activation function, network width = 32, depth = 1, learning rate = 0.005. }
    \label{fig:constadv}
\end{figure}

\begin{figure}[htpb!]
    \centering
    \includegraphics[width=0.6\textwidth]{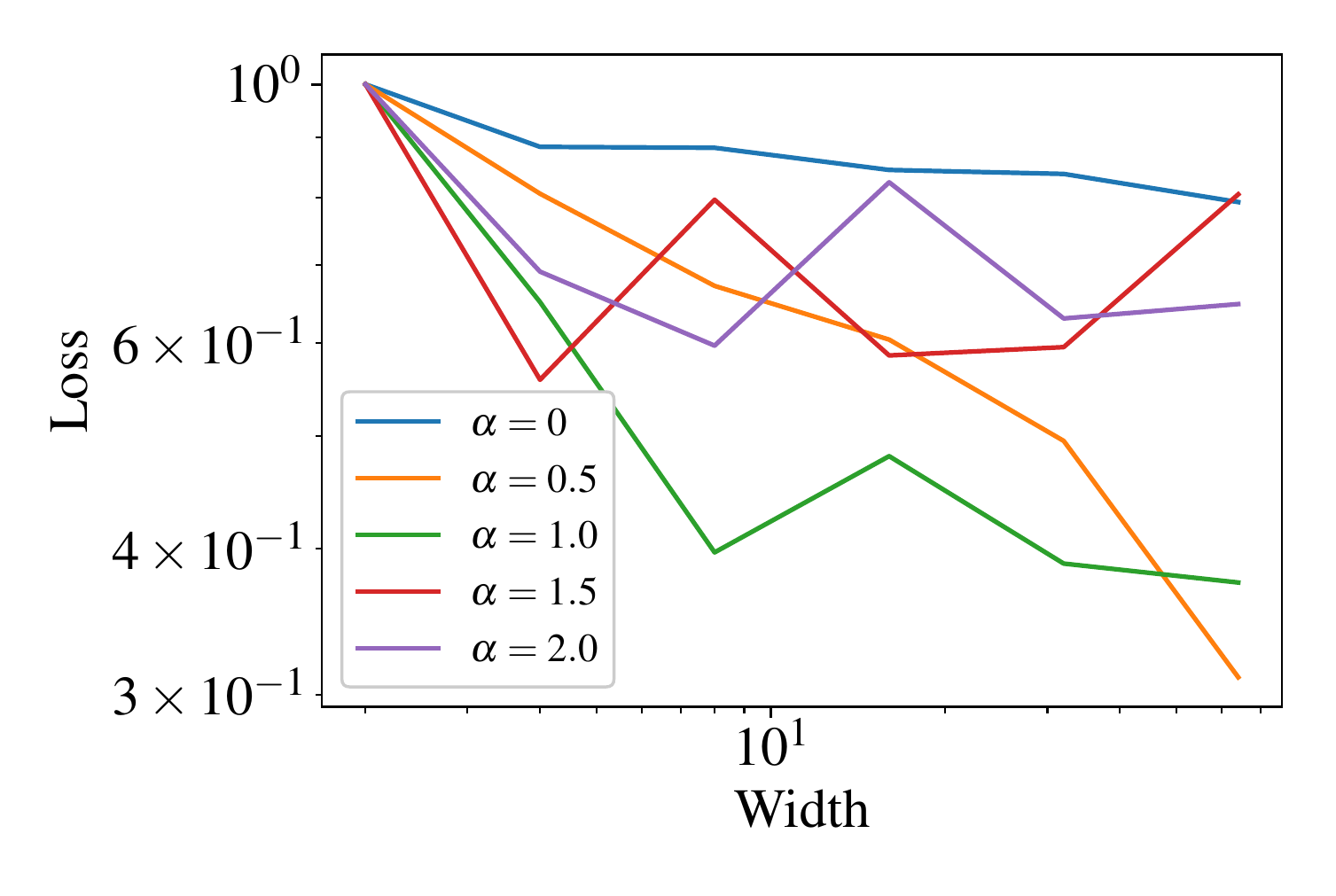}\\
    \includegraphics[width=0.4\textwidth]{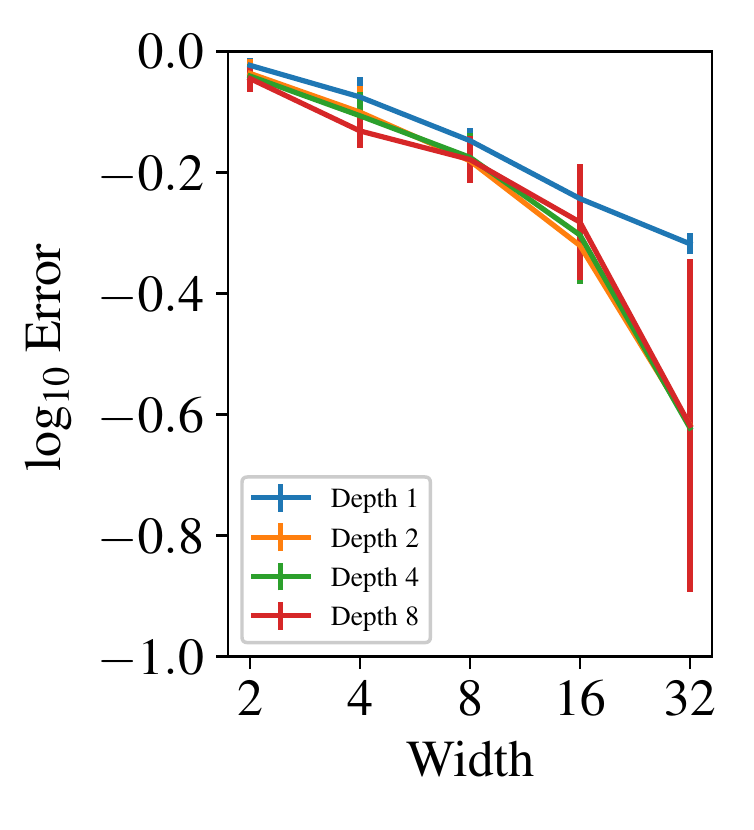}
    \includegraphics[width=0.4\textwidth]{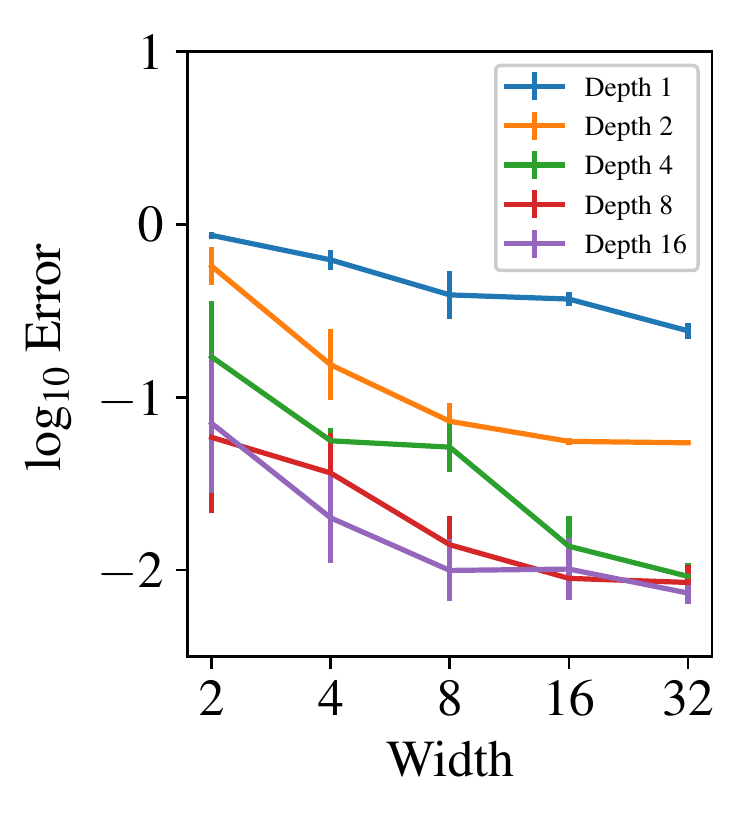}\\
    \includegraphics[width=0.95\textwidth]{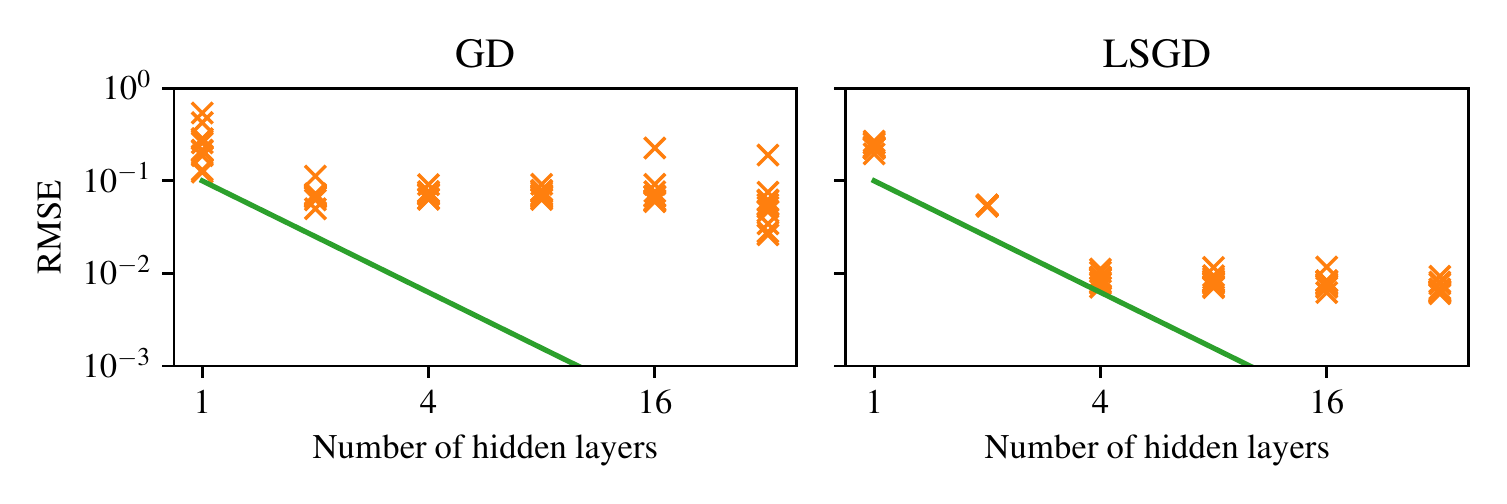}
    \caption{RMS error for ResNet PINNs with Box initialization for the nonconstant velocity case.
     \textit{Top:} Convergence of ReLU-PINNs solutions with respect to penalty scaling $\epsilon \sim W^{-\alpha}$.
     \textit{Middle:} Convergence of ReLU (\textit{left}, learning rate 0.001) and tanh (\textit{right}, learning rate 0.01) using $\alpha = -\frac{1}{2}$.
    \textit{Bottom:} 
    Comparison of between GD (left) and LSGD (right) training for tanh activation functions, learing rate 0.01, width 32 and 5000 epochs. The x's indicate the errors  as a function of the number of layers for different realizations of the Box initialization. The line indicates second order convergence with respect to depth.  
}
    \label{fig:nonconstadv}
\end{figure}

We first consider in Figure~\ref{fig:constadv} the case of constant velocity, $a(x,t) = 1$, with corresponding analytic solution $u(x,t) = u_0(x-t)$, and use a shallow one-layer ReLU network. For this case, the exact solution is in the range of the network for width $\geq 3$, and at this point $\mathcal{J}_1=\mathcal{J}_2=\mathcal{J}_3=0$, rendering the choice of $\epsilon$ unimportant (we set $\epsilon = 1$). In this case we observe similar trends to the previous sections; the proposed LSGD training strategy converges to $10^{-15}$ in double precision with orders of magnitude fewer iterations than GD. From the evolution of the cut planes during training (see Appendix \ref{pinnsSnapshots}), it is clear that the basis is adapting to the characteristics of the PDE.

We next consider nonconstant velocity, $a(x,t) = x$, with corresponding analytic solution $u(x,t) = u_0(x \exp(-t))$ (Figure \ref{fig:nonconstadv}). In this case we must fix $\epsilon$ independent of the neural network size to realize convergence, and we hypothesize $\epsilon = W^{-\alpha}$. Solutions for $\alpha \in \left\{0,\frac12,1,\frac32,2\right\}$ reveal $O(W^\frac12)$ convergence for $\alpha = \frac12$. Following the FEM interpretation of shallow networks (\cite{he2018relu}), we interpret $h \sim N^{-\frac1d}$, and selecting $\epsilon=W^{-\frac12}$ corresponds to non-dimensionalizing the loss, allowing a realizatiion of first-order convergence with respect to $h$. To consider the effect of depth we repeat the previous experiment for increasingly deep ReLU and tanh networks. Finally, to gauge the effectiveness of our training strategy, we compare using GD only vs LSGD.

While a thorough study of PINNs is beyond the scope of this paper, we conclude that the combination of LSGD, Box initialization, and choice of $\epsilon$ provides substantial performance gains relative to traditional GD, and we conclude that depth plays an important role in the convergence of PINNs.

\section{Conclusions}
Motivated by recent theoretical advances in the approximation theory of DNNs, this work takes an adaptive basis viewpoint of neural network training and initialization. This perspective naturally leads to a hybrid least squares/gradient descent training algorithm. We demonstrate that this approach leads to accelerated training for regression, multi-function regression and physics-informed neural networks, in the context of both ReLU and $\tanh$ activation functions. In a novel development, we proposed a new ``box initialization'' procedure inspired by the basis viewpoint that dramatically enhances the training of deep ReLU networks. As part of this we analyzed a potential failure mode for certain initializations that leads to a highly linearly dependent initial basis and demonstrated this failure for the Glorot and He initializations that are commonly used to initialize ReLU networks. For ResNets, we showed how the box initialization leads to a significantly improved basis, ultimately leading to more efficient training than the He initialization. Finally, using the combination of both Box initialization and LSGD training, we demonstrate in \edit{several} scenarios the ability for neural networks to achieve relatively robust convergence as a function of both width and depth, for both \edit{single- and multi-function} regression problems and PDE applications using physics-informed neural networks.

That machine learning algorithms can be understood as providing an underlying adaptive basis from data is a viewpoint that permeates many areas of deep learning \citep{murphy2012machine, he2018relu, fokina2019growing, wang2019stochastic}\edit{, one that is amenable to numerical analysis and admits comparison to well-established methods in scientific computing.}
The techniques developed here, in addition to improving the training of neural networks, demonstrate how an adaptive basis perspective can be used to attack critical issues hindering the robustness of machine learning. Taking a numerical analysis viewpoint has shed new light on the issues confronting neural network training and has provided intuition regarding the use of physics-informed neural networks to solve PDEs. We believe that additional advances are possible when considering the numerical implications of \edit{strategies} in \edit{deep learning}. Our work aims to strengthen the numerical properties of existing \edit{DNN} approaches and also provide a mathematical foundation in response to the need suggested in~\cite{baker2019workshop} to obtain rigorous results for use in scientific machine learning.

\acks{Sandia National Laboratories is a multimission laboratory managed and operated by National Technology and Engineering Solutions of Sandia, LLC, a wholly owned subsidiary of Honeywell International, Inc., for the U.S. Department of Energy’s National Nuclear Security Administration under contract DE-NA0003525. This paper describes objective technical results and analysis. Any subjective views or opinions that might be expressed in the paper do not necessarily represent the views of the U.S. Department of Energy or the United States Government. SAND Number: SAND2019-14904 C.}

\newpage
%\FloatBarrier
\bibliography{refs}

%\FloatBarrier
\newpage
\appendix
\section{Properties and Performance of LSGD training}
\label{lsgd_appendix}
We provide here some supplemental results providing additional insight into the properties and advantages of LSGD and the computational cost relative to GD. We consider first a toy 2D problem in Figure \ref{fig:LSQRcartoon}, \edit{where we compare GD to LSGD for minimizing the loss $5x^2 - 6xy + 5y^2$. This function is quadratic in both $x$ and $y$, but to make an analogy to \eqref{eqn:separableNonLinearLeastSquare} we take the $x$-direction to correspond to the linear activation variable $\bm{\xi}^L$ and the $y$-direction to the hidden variable $\bm{\xi}^H$.} We can visualize explicitly that LSGD realizes the global \edit{minimum} \edit{in $x$} at each step, and thus approaches the \edit{global} minimum \edit{in $(x,y)$} along a trajectory $(x_k, y_k)$ where the \edit{the coordinate $x_k$ always satisfies the least squares problem $x_k = LS(y_k)$ and is ``optimal'' for the coordinate $y_k$.}
\begin{figure}[htpb!]
    \centering
    \includegraphics[width=0.45\textwidth]{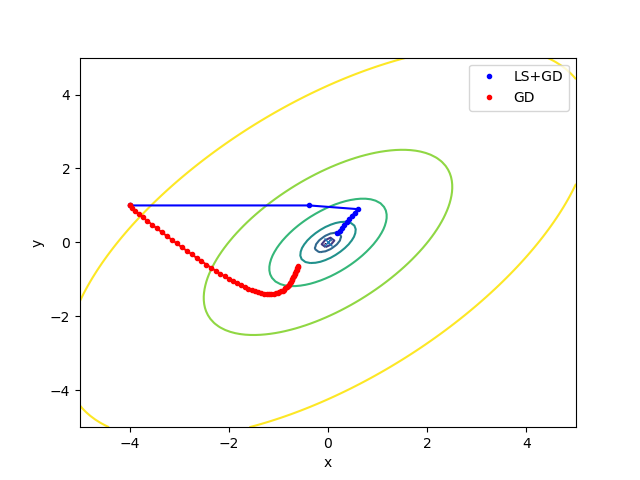}        \includegraphics[width=0.45\textwidth]{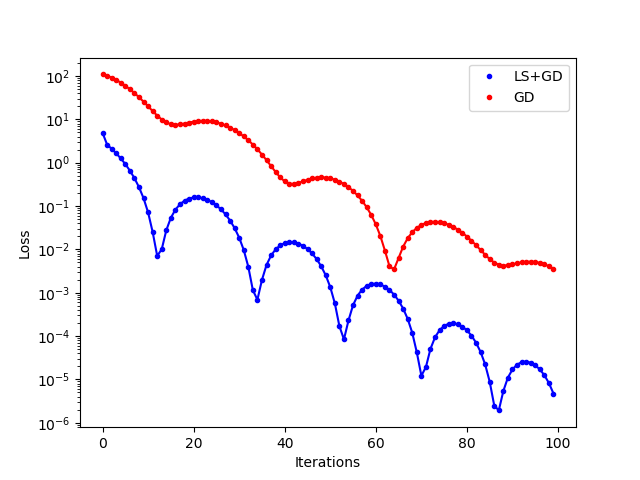}    
    \caption{\edit{\emph{Left:} Paths $(x_k,y_k)$ taken by LSGD and GD} to minimize the function $5 x^2 - 6 x y + 5 y^2$, for learning rate of $0.1$ and initial guess of $(x,y) = (-4,1)$. \edit{Least squares optimiz{\edit{ation}} corresponds to finding  \edit{the} global \edit{minimum} in coordinate \edit{$x$ for} fixed $y$ at each step.
    \edit{Note that for LSGD, after the initial least squares solve, each plotted $(x_k,y_k)$ is the result of gradient descent followed by least squares, rather than either of these steps individually. \emph{Right:} LSGD achieves lower loss for the same number of iterations as GD.}}
    }
    \label{fig:LSQRcartoon}
\end{figure}

We next provide in Figure \ref{fig:lsgd_path} a sketch explaining how the LSGD approach may offer gains due to the fact that the dynamics of training are constrained to follow a manifold $\xi^L = LS(\xi^H)$ which necessarily contains all local minima. This figure also makes clear that the paths of GD training and LSGD training are not comparable globally. While training on this manifold may be more stable and lead to faster training, nothing precludes the existence of barriers along this manifold between an initial condition and a ``good'' local minima, which may be bypassed by GD training, as alluded to in Section \ref{LS_section} during the discussion Figure \ref{fig:lsgd_training}.
%whereby the adaptive basis realizes the global minimizer of the least squares problem. 
\edit{Figure \ref{fig:lsgd_path} also illustrates that LSGD can be viewed as type of coordinate descent method \citep{nocedal2006numerical} in which steps in $\bm{\xi}^L$ are taken until a global minimum is reached before the variables are alternated, although we find a global minimum in one shot with a least squares solver.} 
We \edit{also} conjecture that there may be interesting connections \edit{with} the dynamical system interpretation of training ResNets \citep{haber2017stable, chen2018neural} and work on fast/slow manifold dynamics \citep{gear2005projecting}.
\begin{figure}[t!]
    \centering
    \includegraphics[width=0.5\textwidth]{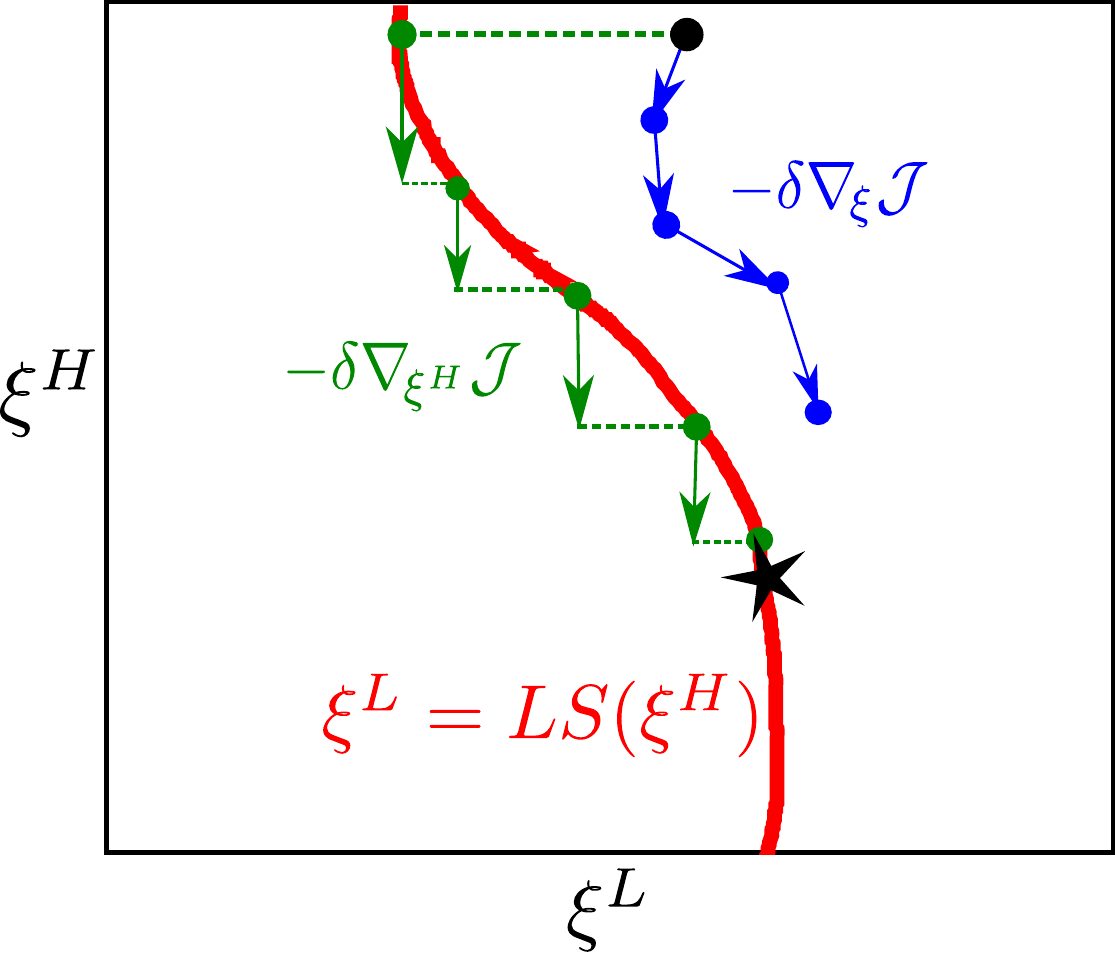}
    \vspace{-2ex}
    \caption{Artist's depiction of the LSGD algorithm. The black dot denotes the initial guess and the black star a minimum that the user wants train the neural network to. The red line represents the submanifold in parameter space $(\bm{\xi}^H, \bm{\xi}^L)$ for which $\bm{\xi}^L$ is a solution to the least squares problem for fixed $\bm{\xi}^H$. Note that because the local minimum illustrated by the black star must also be a global minimum in $\bm{\xi}^L$, it must lie on the manifold $\bm{\xi}^L = LS(\bm{\xi}^H)$ illustrated by the red line. 
    Note also that 
    $\nabla_{\bm{\xi}} \mathcal{J} = 
    (\nabla_{\bm{\xi}^H} \mathcal{J}, \bm{0})$ on the manifold $\bm{\xi}^L = LS(\bm{\xi}^H)$.
    The \edit{blue} curve represent\edit{s} a path of the GD method, while the \edit{rectilinear} green \edit{curve} a path of LSGD. An initial least squares solve (dash\edit{ed} green line) \edit{moves} the neural network parameters to the submanifold $\bm{\xi}^L = LS(\bm{\xi}^H)$. In the LSGD algorithm, all gradients are computed from this manifold. \edit{Each step of} gradient descent \edit{can move the parameters} off this manifold, \edit{but} the least squares solve \edit{that follows} will project back onto the \edit{manifold}. 
    }
    \label{fig:lsgd_path}
\end{figure}

The computational cost of including the least squares step for \edit{$\bm{\xi}^L$} after each gradient descent step in \edit{$\bm{\xi}^H$} depends heavily on the implementation details of both steps -- for example, the specific least squares solver, whether GPU acceleration is used for gradient descent, memory access pattern used to overwrite linear layer variables, etc. Generally, the least squares solve only increases with the width of the network ($O(W^3)$ for dense solvers), whereas the gradient descent step increases with both the width and the depth, as indicated by Table \ref{complexity_table}. 
\begin{table}[htpb!]
\centering
\begin{tabular}{|c|l|l|l|l|l|}
\hline
                                & \multicolumn{5}{c|}{\textbf{Depth (hidden layers)}}                   \\ \hline
\multirow{5}{*}{\textbf{Width}} & \textbf{}    & \textbf{4} & \textbf{16} & \textbf{64} & \textbf{256} \\ \cline{2-6} 
                                & \textbf{4}   & 1.67       & 1.60        & 1.43        & 1.36         \\ \cline{2-6} 
                                & \textbf{16}  & 2.08       & 1.50        & 1.44        & 1.41         \\ \cline{2-6} 
                                & \textbf{64}  & 1.68       & 1.37        & 1.37        & 1.33         \\ \cline{2-6} 
                                & \textbf{256} & 2.00       & 1.65        & 1.52        & 1.49         \\ \hline
\end{tabular}
\caption{Relative increase in wall time for 1000 iterations of LSGD vs pure GD (using the Adam optimizer) for a plain ReLU network, obtained using CPU implementation of Tensorflow on an Intel i7-8700K processor. For deeper networks, the increase is smaller since the computational cost of gradient descent, unlike that of least squares, grows with the depth and dominates the wall time.}
\label{complexity_table}
\end{table}

%\FloatBarrier
\newpage
\section{Box Initialization Algorithms \& Details}
\label{algorithm_appendix}
We provide in Algorithms \ref{alg:dense-init} and \ref{alg:dense-resnet} concise definitions of the box initialization algorithm for both plain and ResNet DNNs, respectively. 

\edit{Algorithm \ref{alg:dense-init} initializes layer $l$, and takes as input the dimension of the input to this layer, i.e., the width $w_{l-1}$ of the previous layer, and the output dimension $w_l$. The main points of this algorithm are outlined in Section \ref{plain_init_subsec}.
Note that the random normal vector with uniform random direction $\hat{\bm{n}}$ is conveniently sampled (lines 3 -- 4) by sampling from a isotropic multivariate normal of mean $\bm{0}$ and then normalizing. 
Once $\bm{p}$ and $\hat{\bm{n}}$ have sampled, a cut plane for a ReLU function is defined. To compute the scaling constant $k$ in Section \ref{plain_init_subsec} such that the maximum of the ReLU function on $[0,1]^{w_{l-1}}$ is $1$, it is necessary to locate the furthest corner $p_{\text{max}} \in [0,1]^{w_{l-1}}$ in the direction of $n$ from the cut plane of the ReLU function. To do this efficiently, we provide a closed form expression in line 5 for the corner of the box where the maximum occurs; this formula is proven in Lemma \ref{maurosTrick}. The scaling factor $k$ is then the inverse of the distance of the cut plane to this corner. 
}

\begin{algorithm}
\caption{Plain Network Box Initialization}
\begin{algorithmic}[1]
    \Function{PlainInit}{$w_{l-1},w_{l}$}
        \State $\bm{p} \sim \mathcal{U}[0,1]^{w_{l-1} \times w_l}$\Comment{Sample $w_l$ points in $[0,1]^{l-1}$}
        \State $\hat{\bm{n}} \sim \mathcal{N}[0,1]^{w_{l-1} \times w_l}$\Comment{Sample from a normal distribution}
        \State $n_{ij} = \hat{n}_{ij}/||\hat{\bm{n}}_j||_2^2$\Comment{$w_{l}$ random unit vectors of dimension $w_{l-1}$}
        \State $p_{\max} = \max(0,\mathrm{sign}(n_{ij}))$
        \State $k_j=1/\sum_i \left((p_{\max} - p_{ij})n_{ij}\right)$
        \State $A_{ij}^l=k_j n_{ij}$
        \State $b_i^l = \sum_j{k_j n_{ij} p_{ij}}$
        \State \Return $\bm{A}^l,\bm{b}^l$
    \EndFunction
\end{algorithmic}
\label{alg:dense-init}
\end{algorithm}

\begin{lemma}\label{maurosTrick}
Let $\mathbb{H}$ be a $(d-1)$ dimensional hyperplane in $\mathbb{R}^d$ and let $\bm{n}$ be a normal to $\mathbb{H}$. Then, the maximum distance along direction $\bm{n}$ from $\mathbb{H}$ and any point in the unit hypercube $[0,1]^d$ is achieved on 
\begin{equation}
( \textup{max}(\textup{sgn}(n_1), 0),  \textup{max}(\textup{sgn}(n_2), 0), \hdots, \textup{max}(\textup{sgn}(n_d), 0) ) = \textup{max} (\textup{sgn}(\bm{n}), \bm{0}). 
\end{equation}
\end{lemma}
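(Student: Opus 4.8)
The plan is to reduce the claim to a separable linear program over the hypercube. Fix any point $\bm{q} \in \mathbb{H}$, so that $\mathbb{H} = \{\bm{y} \in \mathbb{R}^d : (\bm{y}-\bm{q})\cdot\bm{n} = 0\}$. For a point $\bm{x}$, the displacement from $\mathbb{H}$ measured along the direction $\bm{n}$ is the signed quantity $(\bm{x}-\bm{q})\cdot\bm{n}/\|\bm{n}\|$; maximizing this over $\bm{x} \in [0,1]^d$ is the precise meaning of ``maximum distance along $\bm{n}$'' (the maximizer of the signed distance is automatically the point furthest on the positive side, and hence furthest in absolute value among points on that side). Since $\bm{q}$ and $\|\bm{n}\|$ are fixed, the optimization is equivalent to
\begin{equation}
\max_{\bm{x}\in[0,1]^d}\ \bm{x}\cdot\bm{n} \;=\; \max_{\bm{x}\in[0,1]^d}\ \sum_{j=1}^d x_j n_j .
\end{equation}

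Next I would observe that both the objective $\sum_j x_j n_j$ and the constraint set $[0,1]^d = \prod_j [0,1]$ are products over the coordinates, so the maximization decouples into $d$ independent one-dimensional problems $\max_{x_j \in [0,1]} x_j n_j$. Each of these is immediate: the optimum is attained at $x_j = 1$ when $n_j > 0$, at $x_j = 0$ when $n_j < 0$, and is independent of the choice of $x_j \in [0,1]$ when $n_j = 0$. In every case the value $x_j = \max(\mathrm{sgn}(n_j),0)$ is an optimizer of the $j$-th subproblem, and assembling these coordinates yields the corner $\max(\mathrm{sgn}(\bm{n}),\bm{0})$ claimed in the statement. (If one prefers strict uniqueness, note the maximizer is unique precisely when no coordinate of $\bm{n}$ vanishes; otherwise the stated corner is merely one valid choice among the optimal face, which is all the initialization algorithm needs.)

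There is no real obstacle here — the only points requiring a word of care are making the ``signed distance along $\bm{n}$'' interpretation explicit so that the problem is genuinely a maximization of the affine functional $\bm{x}\mapsto(\bm{x}-\bm{q})\cdot\bm{n}$, and noting the harmless non-uniqueness in the degenerate case $n_j=0$. Everything else is the elementary fact that a linear functional on a box is maximized at a vertex, found coordinatewise.
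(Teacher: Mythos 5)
Your proof is correct, but it takes a genuinely different route from the one in the paper. You reduce the problem to maximizing the linear functional $\bm{x}\mapsto\bm{x}\cdot\bm{n}$ over the box and observe that both the objective and the feasible set are products over coordinates, so the optimization decouples into $d$ trivial one-dimensional problems whose solutions are exactly $x_j=\max(\mathrm{sgn}(n_j),0)$. This is a direct, constructive verification that the claimed corner is optimal, and it handles the degenerate case $n_j=0$ cleanly. The paper instead argues geometrically from necessary conditions: it takes an optimal corner $C^*$ (whose existence follows from linearity of the objective over the polytope), parallel-transports $\mathbb{H}$ along $\bm{n}$ until it passes through $C^*$, and uses the fact that the $d$ adjacent corners must then lie on the non-positive side of the translated hyperplane to deduce $\langle\bm{n},\bm{v}_i\rangle\le 0$ for the edge vectors $\bm{v}_i=\pm\bm{e}_i$, which forces $C^*_i=1$ when $n_i>0$ and $C^*_i=0$ when $n_i<0$, with $n_i=0$ handled by noting either bit gives the same distance. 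Your separable-linear-program argument is the more elementary of the two and sidesteps the need to first establish that an optimizer sits at a corner and then characterize it; the paper's argument buys a geometric picture (separation of the box by the transported cut plane) that connects more visibly to the cut-plane intuition used throughout Section \ref{sec:init}, but is logically a bit more roundabout. Your explicit remark that the ``distance along $\bm{n}$'' must be read as the signed displacement $(\bm{x}-\bm{q})\cdot\bm{n}/\|\bm{n}\|$ is a worthwhile clarification that the paper leaves implicit.
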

\begin{proof}
Let us refer to the distance in question as the \emph{directed distance}. The maximum directed distance is achieved on a corner of  $[0,1]^d$, not necessarily unique. Let $C^*$ be such a corner; the fact that $C^*$ maximizes the directed distance from $\mathbb{H}$ is invariant under parallel transport of the hyperplane $\mathbb{H}$ in direction $\bm{n}$. Parallel transport $\mathbb{H}$ in direction $\bm{n}$ until the plane lands on $C^*$; then every point in $[0,1]^d$ is either on $\mathbb{H}$ or on the opposite of $\mathbb{H}$ from $\bm{n}$. Let $C^*_i \in \{ 0, 1 \}$ denote the coordinates of $C^*$. Make $C^*$ the origin. Consider the $d$ unit vectors $\bm{v}_i$ from $C^*$ to the other $d$ corners of $[0,1]^d$ along one of the axes. If the coordinate $C^*_i$ had been $1$, then $\bm{v}_i = -\bm{e}_i$; else if $C^*_i$ had been $0$, then $\bm{v}_i = \bm{e}_i$. Since the other corners are separated from $\bm{n}$ by $\mathbb{H}$, we have that
\begin{equation}
0 \ge \langle \bm{n}, \bm{v}_i \rangle = 
\begin{cases}
n_i \text{ if } C^*_i = 0 \\
-n_i \text{ if } C^*_i = 1. 
\end{cases}
\end{equation}
Hence if $0 > n_i$, then $C_i^* = 0$, while if 
$0 < n_i$, then $C_i^* = 1$. If $n_i = 0$, then both the corner
$C^*$ and that corner with the bit $C^*_i$ flipped achieve the same 
directed distance from $\mathbb{H}$, so we may
take $C^*_i = 0$.
\end{proof}

\edit{Algorithm \ref{alg:dense-resnet} follows from the outline in Section \ref{resnet_init_subsec} in a similar way.
We initialize the first hidden layer as a plain hidden layer using Algorithm \ref{alg:dense-init} above (this is necessary for $d \neq w$). Then $p_{\text{max}}$ is found by applying the same algorithm in Lemma \ref{maurosTrick} and scaling by the constant $m$ to yield a corner in the box $[0,m]^{w_{l-1}}$. The scaling constant $k$ now includes the factor $\frac{1}{L-1}$.}

\begin{algorithm}
\caption{ResNet Box Initialization}
\begin{algorithmic}[1]
    \Function{ResNetInit}{$w_{l-1},w_{l},L$}
        \If {$l==1$}
            \State \Return PlainInit$(w_{l-1},w_{l})$
        \Else
            \State $m = \left(1+1/(L-1)\right)^l$
            \State $\bm{p} \sim \mathcal{U}[0,m]^{w_{l-1} \times w_l}$
            \State $\hat{\bm{n}} \sim \mathcal{N}[0,1]^{w_{l-1} \times w_l}$
            \State $n_{ij} = \hat{n}_{ij}/||\hat{\bm{n}}_j||_2^2$
            \State $p_{\max,ij} = m \max(0,\mathrm{sign}(n_{ij}))$
            \State $k_j=1/\sum_i \left( (p_{\max,ij} - p_{ij})n_{ij}(L-1)\right)$
            \State $A_{ij}^l=k_j n_{ij}$
            \State $b_i^l = \sum_j{k_j n_{ij} p_{ij}}$
            \State \Return $\bm{A}^l,\bm{b}^l$
        \EndIf
    \EndFunction
\end{algorithmic}
\label{alg:dense-resnet}
\end{algorithm}

%\FloatBarrier
\newpage
\section{Basis Function Plots}
\label{basis_appendix}
Figures~\ref{fig:basis_plain} and \ref{fig:basis_resnet} show the basis functions at initialization for Plain and ResNet architectures, respectively, in one-dimension.

\begin{figure}[htpb!]
    \centering
    \includegraphics[width=\textwidth]{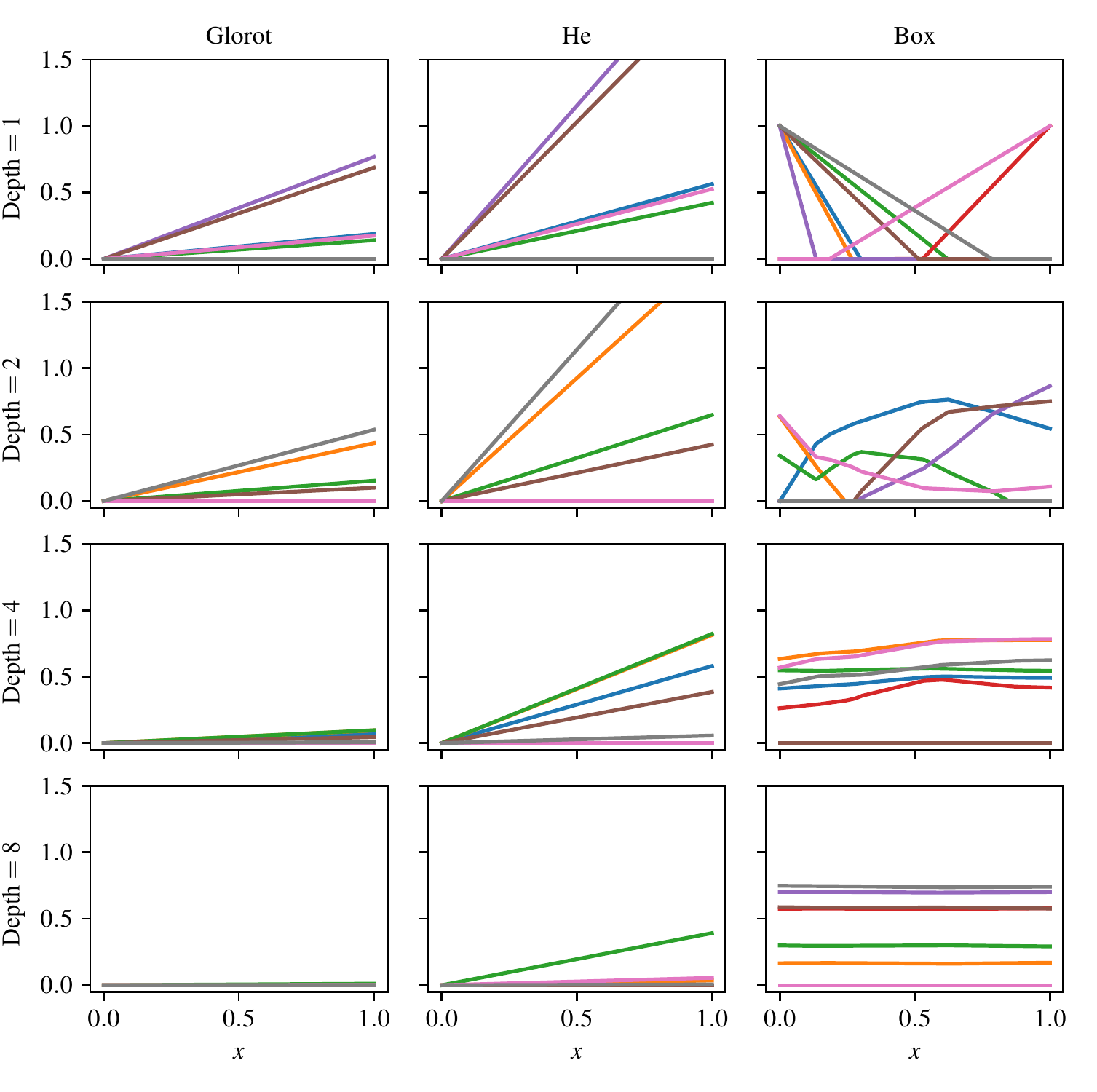}
    \vspace{-2ex}
    \caption{\textbf{Plain network basis} functions, $\Phi_i$, after initialization for Glorot (\emph{left column}), He (\emph{middle column}), and Box (\emph{right)} initializations with increasing depth and width 8. The input is one-dimensional. As discussed in Section \ref{plain_init_subsec}, these figures illustrate that the Box initialized basis is richer in features than the He and Glorot initialized bases, but suffers from ``collapse'' to constant functions as depth increases (notice that this tendency is also visible for the He and Glorot basis functions).}
    \label{fig:basis_plain}
\end{figure}

\begin{figure}[htpb!]
    \centering
    \includegraphics[width=\textwidth]{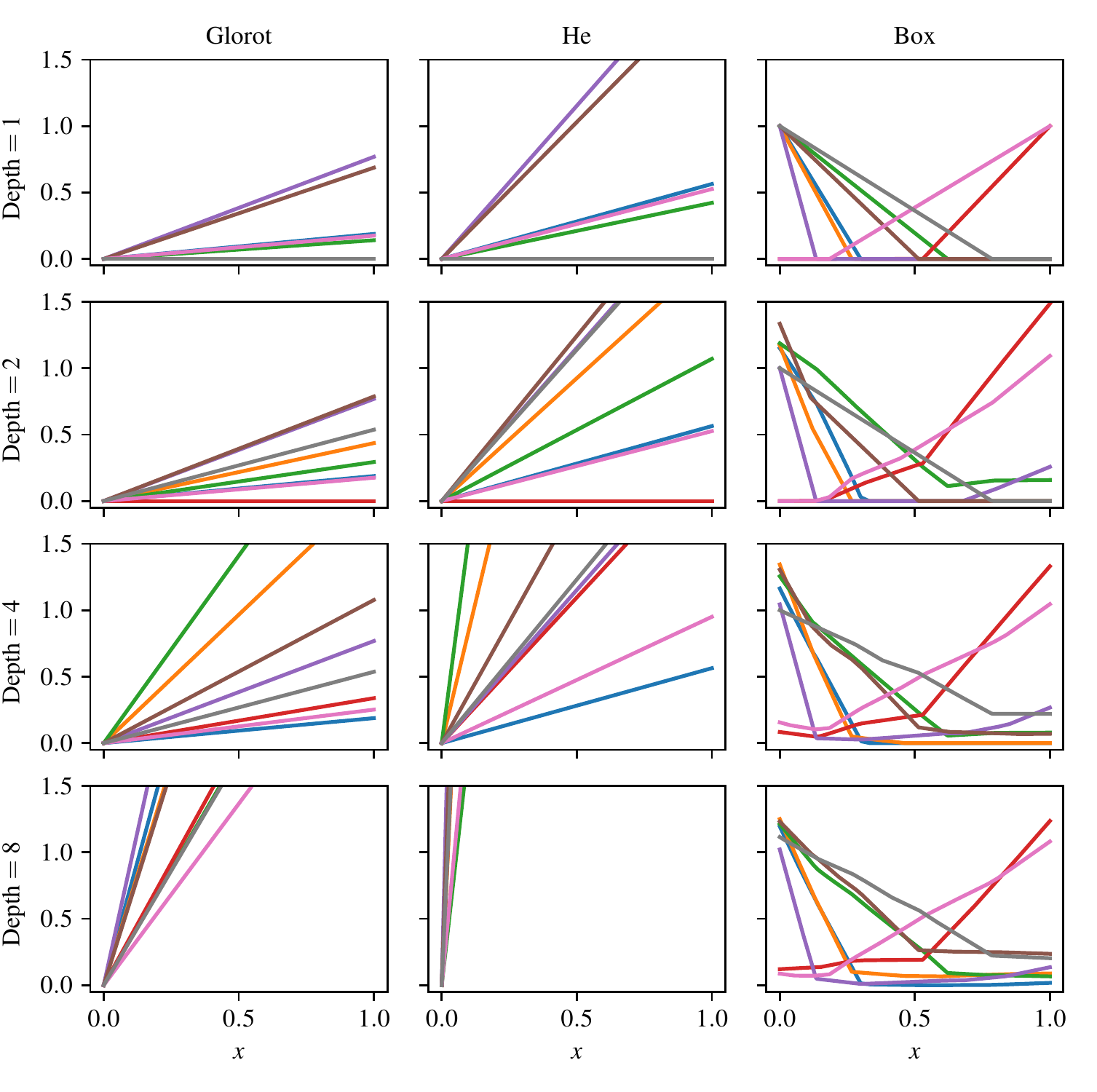}
    \vspace{-2ex}
    \caption{\textbf{ResNet basis} functions, $\Phi_i$, after initialization for Glorot (\emph{left column}), He (\emph{middle column}), and Box (\emph{right)} initializations with increasing depth and width 8. The input is one-dimensional. As discussed in Section \ref{resnet_init_subsec}, these figures illustrate that for ResNets, the box initialization consistently (with depth) produces basis functions with more features in the input domain than the He and Glorot initializations. The box initialized basis no longer suffers from the collapse to a constant basis as for plain architectures, nor does it exhibit the blow-up evident in the He basis, as depth increases.}
    \label{fig:basis_resnet}
\end{figure}

%\FloatBarrier
\newpage
\section{PINNs snapshots}\label{pinnsSnapshots}
The images below depict the PINN solution to the constant coefficient transport equation at training step $i$ with the cut planes of the ReLU basis superimposed as dashed red lines. These training snapshots demonstrate that the LSGD trained PINN (right column) finds the correct characteristics of the PDE with ReLU cutplanes far faster than the GD trained PINN (left column). Note that the $i$'s are different in the two columns, and both networks have identical initializations. 
\begin{figure*}[htbp!]
   \centering
\begin{tabular}{c|c}
{\Large Gradient Descent}
&
{\Large LSGD} \\
\hline
\\
\includegraphics[width=1.5in]{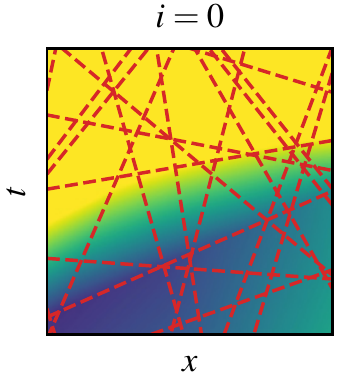} 
\includegraphics[width=1.5in]{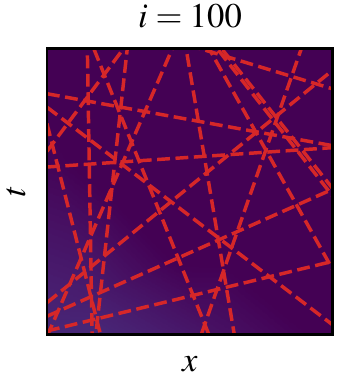}
&
\includegraphics[width=1.5in]{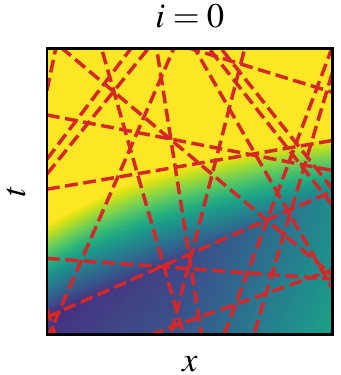}
\includegraphics[width=1.5in]{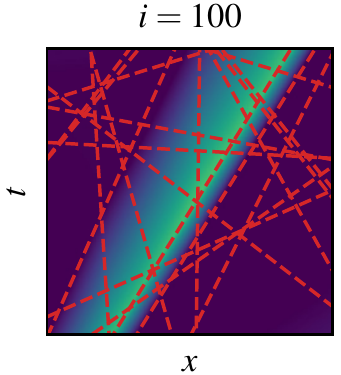}
\\
\includegraphics[width=1.5in]{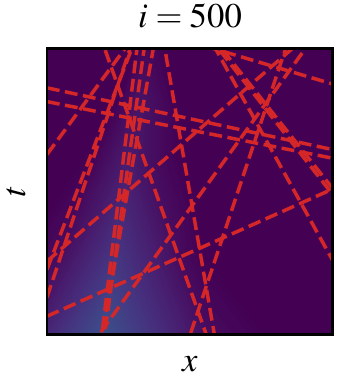} 
\includegraphics[width=1.5in]{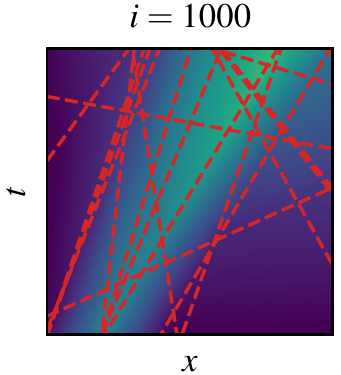} 
&
\includegraphics[width=1.5in]{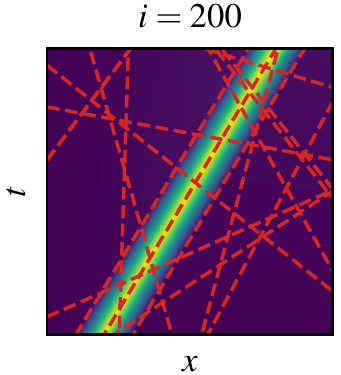}
\includegraphics[width=1.5in]{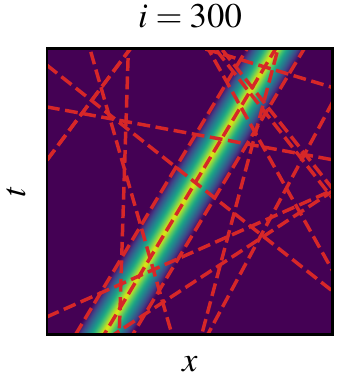}
\\
\includegraphics[width=1.5in]{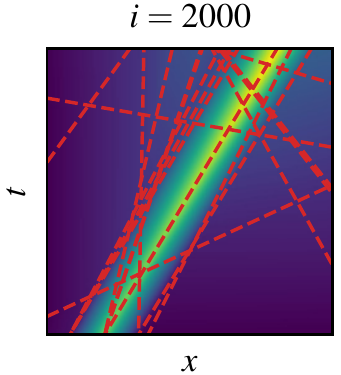} 
\includegraphics[width=1.5in]{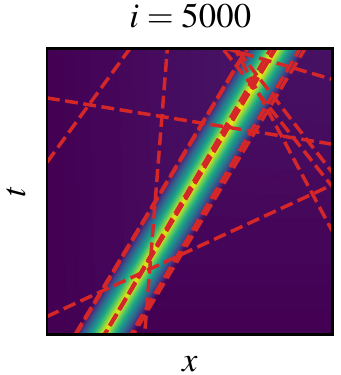} 
&
\includegraphics[width=1.5in]{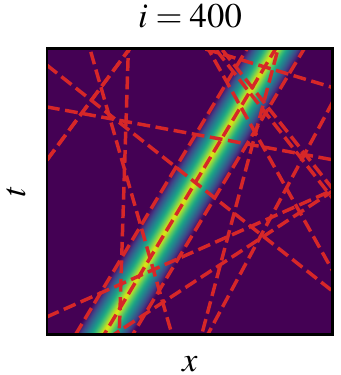}
\includegraphics[width=1.5in]{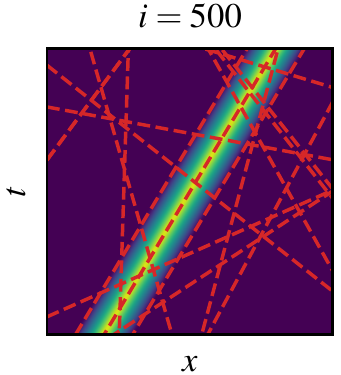}
\end{tabular}
\end{figure*}

\iffalse
\begin{figure*}[htbp!]
   \centering
\begin{tabular}{c|c}
{\Large Gradient Descent}
&
{\Large Least Squares/Gradient Descent} \\
\hline
\\
\includegraphics[width=1.5in]{figures/pinns/GD01000_cropped} 
\includegraphics[width=1.5in]{figures/pinns/GD01000_cropped} 
&
\includegraphics[width=1.75in]{figures/pinns/FEM00300_cropped}
\\
\includegraphics[width=1.75in]{figures/pinns/GD02000_cropped} 
&
\includegraphics[width=1.75in]{figures/pinns/FEM00400_cropped}
\\
\includegraphics[width=1.75in]{figures/pinns/GD05000_cropped} 
&
\includegraphics[width=1.75in]{figures/pinns/FEM00500_cropped}
\end{tabular}
\end{figure*}
\fi

\end{document}